\newtheorem*{proposition*}{Proposition}
\newtheorem{proposition}{Proposition}
\title{
The Primacy of Magnitude in Low-Rank Adaptation
}
\author{%
 Zicheng Zhang$^1$\thanks{Corresponding author: zhangzicheng6@jd.com}
\quad\  {Haoran Li}$^2$
\quad\   {Yifeng Zhang}$^1$
\quad\   {Guoqiang Gong}$^1$ 
\quad\  \textbf{Jiaxing Wang}$^1$
\vspace{0.5em}
\\
\textbf{Junxing Hu$^1$}
\quad\quad  \textbf{Pengzhang Liu$^1$}
\quad\quad  \textbf{Qixia Jiang$^1$}
\vspace{1.0em} 
\\
\texttt{$^1$JD.com}\\
\texttt{$^2$University of Chinese Academy of Sciences} }
\begin{document}

\maketitle

\begin{abstract}
Low-Rank Adaptation (LoRA) offers a parameter-efficient paradigm for tuning large models. While recent spectral initialization methods improve convergence and performance over the naive “Noise \& Zeros” scheme, their extra computational and storage overhead undermines efficiency. In this paper, we establish update magnitude as the fundamental driver of LoRA performance and propose LoRAM, a magnitude-driven “Basis \& Basis” initialization scheme that matches spectral methods without their inefficiencies\footnote[1]{Code is available \href{https://github.com/zhangzc21/LoRAM}{here}}. Our key contributions are threefold:
 \textit{(i) Magnitude of weight updates determines convergence.} 
 We prove low-rank structures intrinsically bound update magnitudes, unifying hyperparameter tuning in learning rate, scaling factor, and initialization as mechanisms to optimize magnitude regulation. 
 \textit{(ii) Spectral initialization succeeds via magnitude amplification.} 
 We demystify that the presumed knowledge-driven benefit of the spectral component essentially arises from the boost in the weight update magnitude.
\textit{(iii) A novel and compact initialization strategy, LoRAM, scales deterministic orthogonal bases using pretrained weight magnitudes to simulate spectral gains}. Extensive experiments show that LoRAM serves as a strong baseline, retaining the full efficiency of LoRA while matching or outperforming spectral initialization across benchmarks.
\end{abstract}

\section{Introduction}
The rise of large pretrained models~\cite{Brown2020LanguageMA,OpenAI2023GPT4TR,touvron2023llama,bai2023qwen,guo2025deepseek} has driven urgent needs for parameter-efficient fine-tuning (PEFT) methods~\cite{peft,Hu2021LoRALA,Lester2021ThePO,He2021TowardsAU,Edalati2022KronAPE,zhang2025parameter}. Among these, Low-Rank Adaptation (LoRA)~\cite{Hu2021LoRALA} stands out for its \textit{efficiency, flexibility, and stability}. By freezing pretrained weights and injecting trainable low-rank matrices, LoRA enables the update of less than 1\% of the parameters, significantly reducing memory and compute costs. Its plug-and-play nature achieves easy integration into diverse models, facilitating model sharing and federated learning~\cite{SLoRA,FeDeRA}. Additionally, LoRA helps prevent catastrophic forgetting~\cite{biderman2024lora}, making it well-suited for continual learning. These advantages have led to its wide adoption in multilingual NLP~\cite{liu2022few,Ding2022DeltaTA,sun2023comparative,zhao2024lora} and multimodal applications~\cite{ye2023mplug,guo2023animatediff,blattmann2023stable,ruiz2023dreambooth}.

Despite achieved efficiency, the low-rank reparameterization constrains practical performance and convergence~\cite{biderman2024lora}.  Besides the well-known “representation bottleneck”~\cite{hyeon2021fedpara,Edalati2022KronAPE,Zi2023DeltaLoRAFH,gao2024parameter,jiang2024mora,lialin2023relora,xia2024chain,PeriodicLoRA},
LoRA is highly sensitive to hyperparameters due to the non-convex and non-smooth loss landscape~\cite{li2024crucial}. Effective training relies on careful tuning of rank~\cite{zeng2023expressive,Zhang2023AdaptiveBA,ding2023sparse}, scaling factor~\cite{kalajdzievski2023rank}, learning rate~\cite{lora+}, initialization strategies~\cite{zhu2024asymmetry,hayou2024impact,li2024crucial}, and preconditioning~\cite{li2024crucial,zhang2024riemannian,lora-one}.
Recent works increasingly leverage information from pretrained weights~\cite{PiSSA,MiLoRA,olora} or task-specific data~\cite{corda,lora-ga,lora-one} to improve the “Noise \& Zeros” baseline. Among these, 
% has emerged as a dominant tool, offering dynamic rank selection and approximation fidelity.  
PiSSA~\cite{PiSSA} pioneers the use of Singular Value Decomposition~(SVD)  for LoRA initialization, employing spectral components of pretrained weights to significantly enhance convergence and performance. Subsequent works~\cite{MiLoRA,lora-ga,olora,lora-one,corda} extend to diverse matrix decompositions, fostering a wave of knowledge-driven initialization schemes.

While spectral initialization~\cite{PiSSA,MiLoRA,lora-ga,lora-one,corda} showcases considerable promise in convergence and performance, two fundamental challenges persist. 
First, they introduce \textbf{complexity by requiring additional matrix decomposition and storage overhead}, undermining usage efficiency in resource-constrained settings~\cite{vera,gao2024parameter,QLoRA} and hindering seamless integration with deep learning libraries~\cite{peft,wolf-etal-2020-transformers}.
Second, \textbf{their success remains poorly understood}. 
 The common justification~\cite{PiSSA,MiLoRA,corda,mao2025survey} that spectral components preserve features better than random alternatives lacks theoretical grounding. Although recent works~\cite{lora-ga,lora-one} suggest that some specific initialization may approximate full-parameter gradients, the non-convex nature of LoRA renders training dynamics unpredictable.

In this paper, we demystify the knowledge-driven intuition behind spectral initialization and reveal that its effectiveness primarily stems from the magnitude scaling of weight updates. We design a minimal “Basis \& Basis” initialization strategy, which demonstrates comparable performance without the overhead of SVD operations. Specifically, our key contributions can be summarized as:

\setlength{\itemsep}{0.1pt}
\begin{itemize}[topsep=0em, leftmargin=0em, itemindent=1em]
    \item We identify \textbf{weight update magnitude as a fundamental principle} for analyzing and improving LoRA's training dynamics. This principle unifies previously independent factors, such as learning rate, scaling, and initialization, revealing their shared ability to control weight update strength and achieve comparable amplification effects when properly configured.
    
    \item We demonstrate that \textbf{initialization scheme critically shapes LoRA's weight magnitude dynamics}. Theoretically, we prove that LoRA naturally produces smaller updates than full fine-tuning, which limits its convergence and expressiveness. Moreover, we show that {spectral initialization amplifies updates}, providing a principled explanation for its effectiveness beyond knowledge-driven intuition.
    
    \item Guided by the magnitude principle, we propose \textbf{Magnitude-driven Initialization (LoRAM)} to make LoRA initialization efficient again. LoRAM employs a logarithmic magnitude factor to retain the benefits of spectral scaling, while directly scaling deterministic orthogonal bases to eliminate the need for decomposition and storage. Extensive experiments on language and vision-language tasks establish LoRAM as a strong and practical baseline, surpassing prior initialization schemes.
\end{itemize}

\begin{figure}[t]
    \centering
    \vspace{-2em}
    \includegraphics[width=1.0\linewidth]{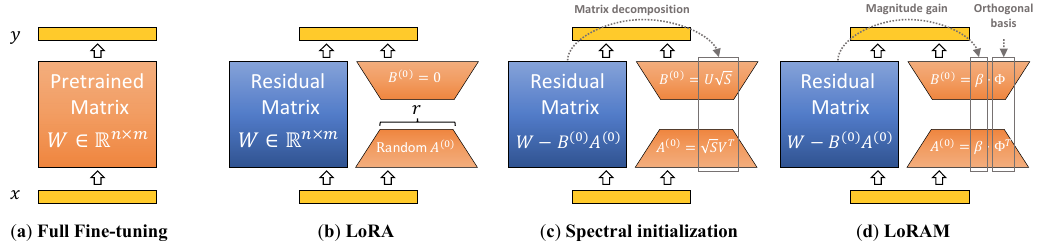}
    \caption{We propose LoRAM, a magnitude-driven initialization method that enhances both the convergence and performance of LoRA while maintaining its efficiency. \textit{Unlike spectral initialization, which precomputes and stores singular components ($U, V, S$)}~\cite{PiSSA}, \textit{LoRAM uses deterministic orthogonal bases and derives scaling from pretrained weight statistics.} This elegant simplification is grounded in our analysis of LoRA through a novel lens of magnitude dynamics, where we show that the benefits of spectral values in scaling weight update magnitude can be effectively approximated.}
    \vspace{-1em}
    \label{fig:teaser}
\end{figure}

\section{Magnitude Principle for Characterizing LoRA Dynamics}
\subsection{Preliminaries and Notations}
 Given a pretrained weight matrix \( W \in \mathbb{R}^{n \times m} \), LoRA~\cite{Hu2021LoRALA} reparameterizes the forward pass as
\begingroup
\setlength{\abovedisplayskip}{4pt}
\setlength{\belowdisplayskip}{4pt}
\setlength{\abovedisplayshortskip}{4pt}
\setlength{\belowdisplayshortskip}{4pt}
\begin{align}\label{eq:lora}
\scalebox{1.0}{$
    y = Wx + W_{\text{LoRA}}x = Wx + \alpha( BA  - B^{(0)} A^{(0)})x,
$}
\end{align}
\endgroup
where \( B \in \mathbb{R}^{n \times r} \), \( A \in \mathbb{R}^{r \times m} \) are trainable low-rank matrices with \( r \ll \min(n,m) \), and \( \alpha \) scales the update magnitude. The initialization term \( B^{(0)} A^{(0)} \) can be absorbed into \( W \) for the convenience, as illustrated in Figure~\ref{fig:teaser}(b).  
Given a loss function \( L \), LoRA updates are computed as:  
\begingroup
\setlength{\abovedisplayskip}{4pt}
\setlength{\belowdisplayskip}{4pt}
\setlength{\abovedisplayshortskip}{4pt}
\setlength{\belowdisplayshortskip}{4pt}
\begin{equation}\label{eq:lora_gradient}
\scalebox{1.0}{$
    \nabla_A L= \alpha B^\top \frac{\partial L}{\partial y} x^\top = \alpha B^\top (\nabla_W L), \quad
    \nabla_B L= \alpha \frac{\partial L}{\partial y} x^\top A^\top = \alpha (\nabla_W L) A^\top.
$}
\end{equation}
\endgroup

\textbf{Magnitude metric.}  
To elucidate how LoRA affects the training process, we analyze the dynamics of the parameters \(A\), \(B\), and the resulting weight update \(W_{\text{LoRA}}\). Specifically, we define the weight magnitude as \(\nu[W_{\text{LoRA}}] = \frac{1}{mn} \|W_{\text{LoRA}}\|_F^2\), which serves as a central metric in our study. Assuming independent and zero-mean entries~\cite{he2015delving}, the expected weight magnitude is given by \(\mathbb{E}[\nu[BA]] = r\, \mathbb{E}[\nu[B]]\, \mathbb{E}[\nu[A]]\). In the asymptotic regime where \(m\) and \(n\) are large,  \(\nu[W_{\text{LoRA}}]\) is approximated with the variance of $W_{\text{LoRA}}$, and \(\nu[BA] \approx r\, \nu[B]\, \nu[A]\). Our analysis is  motivated by the fact that LoRA introduces no change to the pretrained weights initially, \textit{i.e.}, \( \nu[W^{(0)}_{\text{LoRA}}] = 0 \), while its effect emerges gradually through training. Therefore, we use the term “magnitude” instead of “variance” to highlight the cumulative growth of \( W_{\text{LoRA}} \).  
In Appendix~\ref{appendix:Lower Bound on Representation Error}, we also present a theoretical insight showing that parameter magnitude is a key determinant of LoRA’s expressiveness.

\subsection{Effect of Hyperparameters on Update Magnitude}
Let $\Delta W^{(t)}_{\mathrm{LoRA}}$ denote the weight update at step $t$ for LoRA framework, and $W^{(t)}_{\text{LoRA}} = \sum_{i=0}^{t-1} \Delta W^{(i)}_{\mathrm{LoRA}}$ represent cumulative adaptation. Given a learning rate $\eta$, we expand $\Delta W^{(t)}_{\mathrm{LoRA}}$ using gradient update rules,
leading to the following formulations for the update magnitude\footnote{See Appendix for derivation and proof}:
\begingroup
\setlength{\abovedisplayskip}{4pt}
\setlength{\belowdisplayskip}{4pt}
\setlength{\abovedisplayshortskip}{4pt}
\setlength{\belowdisplayshortskip}{4pt}
\begin{align}
  \Delta W^{(t)}_{\mathrm{LoRA}} &= \alpha\eta \left( B^{(t)} \nabla_A L^{(t)} + \nabla_B L^{(t)} A^{(t)} + \eta \nabla_B L^{(t)}\nabla_A L^{(t)} \right), \label{eq:update expand} \\
\text{and} \quad  \nu[\Delta W^{(t)}_{\mathrm{LoRA}}] &\approx r\alpha^2\eta^2 \left( \nu[B^{(t)}]\nu[\nabla_A L^{(t)}] + \nu[\nabla_B L^{(t)}]\nu[A^{(t)}] \right). \label{eq:variance}
\end{align}
\endgroup
These equations indicate the complex interplay of multiple hyperparameters, distinct from the full-parameter updates given by $\Delta W^{(t)} = -\eta \nabla_{W} L^{(t)}$. 
We investigate the interplay among the learning rate $\eta$, scaling factor $\alpha$, and initialization magnitude, revealing a quantifiable equivalence relationship.

\begin{proposition}[Parameter Scaling Equivalence]\label{prop: Parameter Scaling Equivalence}
For LoRA layers defined in Eq.~\eqref{eq:lora}, consider decomposing the scaling factor $\alpha = \alpha' \alpha_A \alpha_B$, where $\alpha', \alpha_A, \alpha_B \in \mathbb{R}^+$. Under the commonly used optimization frameworks with negligible numerical errors, the following parametrization schemes exhibit dynamical equivalence throughout training:  For all iterations $t \geq 0$, $\Delta W^{(t)}_{\mathrm{LoRA}} = \Delta \tilde{W}^{(t)}_{\mathrm{LoRA}}$ and $ W^{(t)}_{\mathrm{LoRA}} = \tilde{W}^{(t)}_{\mathrm{LoRA}}$, where $\tilde{A}^{(t)}$, $\tilde{B}^{(t)}$ and $\tilde{W}^{(t)}$ represent the re-parameterized versions.
\vspace{-1em}
\begin{center}
\footnotesize
\renewcommand{\arraystretch}{0.85}
\resizebox{\textwidth}{!}{
\begin{tabular}{lccc}
\toprule 
& \textbf{Original} & \textbf{SGD} & \textbf{Adam} \\
\midrule
Representation &  $\alpha B A x$ & $\alpha' \tilde{B}\tilde{A} x$ & $\alpha' \tilde{B}\tilde{A} x$ \\
\addlinespace[0.5em]
Initialization 
& $\begin{aligned} A^{(0)} = A_{\text{init}},  B^{(0)} = B_{\text{init}} \end{aligned}$ 
& $\begin{aligned} \tilde{A}^{(0)} = \alpha_A A_{\text{init}}, \tilde{B}^{(0)} = \alpha_B B_{\text{init}} \end{aligned}$ 
& $\begin{aligned} \tilde{A}^{(0)} = \alpha_A A_{\text{init}}, \tilde{B}^{(0)} = \alpha_B B_{\text{init}} \end{aligned}$ \\
\addlinespace[0.5em]
Learning Rates 
& $\begin{aligned} \eta_A > 0 , \eta_B > 0 \end{aligned}$ 
& $\begin{aligned} \eta_{\tilde{A}} = \alpha^2_A \eta_A, \eta_{\tilde{B}} = \alpha^2_B \eta_B \end{aligned}$
& $\begin{aligned} \eta_{\tilde{A}} = \alpha_A \eta_A, \eta_{\tilde{B}} = \alpha_B \eta_B \end{aligned}$ \\
\bottomrule
\end{tabular}
}
\end{center}
\end{proposition}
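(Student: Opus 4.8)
The plan is to establish dynamical equivalence by an explicit change of variables, $\tilde A^{(t)} = \alpha_A A^{(t)}$ and $\tilde B^{(t)} = \alpha_B B^{(t)}$ at every LoRA layer, and to prove by induction on $t$ that running the re-parameterized system with the optimizer-specific learning rates in the table produces exactly these scaled iterates. The base case is immediate: the stated initializations give $\tilde A^{(0)} = \alpha_A A_{\mathrm{init}} = \alpha_A A^{(0)}$ and $\tilde B^{(0)} = \alpha_B B^{(0)}$, and since $\alpha' \tilde B^{(0)} \tilde A^{(0)} = \alpha' \alpha_A \alpha_B\, B^{(0)} A^{(0)} = \alpha\, B^{(0)} A^{(0)}$, the two networks compute the same function at step $0$, so $W^{(0)}_{\mathrm{LoRA}} = \tilde W^{(0)}_{\mathrm{LoRA}} = 0$.

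For the inductive step, assume $\tilde A^{(t)} = \alpha_A A^{(t)}$ and $\tilde B^{(t)} = \alpha_B B^{(t)}$ at every layer. Then $\alpha' \tilde B^{(t)} \tilde A^{(t)} = \alpha\, B^{(t)} A^{(t)}$ layerwise, so both networks realize the same function on the same (frozen $W$, same minibatch) input; hence the loss and every per-layer backpropagated signal $\nabla_W L^{(t)} = (\partial L / \partial y)\, x^\top$ coincide. Substituting into Eq.~\eqref{eq:lora_gradient}, but with $\alpha$ replaced by $\alpha'$ for the tilde system, yields $\nabla_{\tilde A} L^{(t)} = \alpha' (\tilde B^{(t)})^\top \nabla_W L^{(t)} = \alpha' \alpha_B\, (B^{(t)})^\top \nabla_W L^{(t)} = \tfrac{1}{\alpha_A}\, \nabla_A L^{(t)}$ and, symmetrically, $\nabla_{\tilde B} L^{(t)} = \tfrac{1}{\alpha_B}\, \nabla_B L^{(t)}$ (the frozen constant $B^{(0)} A^{(0)}$ drops out of the gradient). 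For \textbf{SGD} this closes the induction directly: $\tilde A^{(t+1)} = \tilde A^{(t)} - \eta_{\tilde A}\, \nabla_{\tilde A} L^{(t)} = \alpha_A A^{(t)} - \alpha_A^2 \eta_A \cdot \tfrac{1}{\alpha_A}\, \nabla_A L^{(t)} = \alpha_A\big(A^{(t)} - \eta_A \nabla_A L^{(t)}\big) = \alpha_A A^{(t+1)}$, and likewise $\tilde B^{(t+1)} = \alpha_B B^{(t+1)}$. Therefore $\tilde W^{(t)}_{\mathrm{LoRA}} = \alpha'(\tilde B^{(t)} \tilde A^{(t)} - \tilde B^{(0)} \tilde A^{(0)}) = \alpha(B^{(t)} A^{(t)} - B^{(0)} A^{(0)}) = W^{(t)}_{\mathrm{LoRA}}$ for all $t$, and differencing consecutive iterates gives $\Delta \tilde W^{(t)}_{\mathrm{LoRA}} = \Delta W^{(t)}_{\mathrm{LoRA}}$.

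For \textbf{Adam} I would enlarge the induction hypothesis to include the moment buffers. Because the tilde gradient stream for $A$ is $\tfrac{1}{\alpha_A}$ times the original one and all buffers are zero-initialized, linearity of the first-moment recursion gives $m_{\tilde A}^{(t)} = \tfrac{1}{\alpha_A} m_A^{(t)}$ and the elementwise-square second-moment recursion gives $v_{\tilde A}^{(t)} = \tfrac{1}{\alpha_A^2} v_A^{(t)}$, relations preserved by the scalar bias-correction factors $1/(1-\beta_1^t)$ and $1/(1-\beta_2^t)$. Hence the preconditioned direction obeys $\hat m_{\tilde A}^{(t)}/\big(\sqrt{\hat v_{\tilde A}^{(t)}} + \epsilon\big) = \hat m_A^{(t)}/\big(\sqrt{\hat v_A^{(t)}} + \alpha_A \epsilon\big)$, which differs from the original direction only through the stabilizer $\epsilon$; under the proposition's \emph{negligible numerical errors} assumption ($\epsilon \to 0$) the two directions agree exactly, so the choice $\eta_{\tilde A} = \alpha_A \eta_A$ yields $\tilde A^{(t+1)} = \alpha_A A^{(t+1)}$ and, symmetrically, $\eta_{\tilde B} = \alpha_B \eta_B$ yields $\tilde B^{(t+1)} = \alpha_B B^{(t+1)}$; the remaining conclusions follow exactly as in the SGD case.

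The main obstacle is the Adam branch: the preconditioned step is scale-invariant only in the idealized $\epsilon = 0$ limit, which is precisely why the statement must invoke \emph{negligible numerical errors}, and one has to carry the buffer-scaling invariants ($m$ scaling linearly, $v$ quadratically, with bias correction preserved) through the induction rather than reasoning about a single update in isolation. Everything else is bookkeeping; I would also remark that the argument does not depend on the data or minibatch schedule (both systems are fed identical inputs at every step) and that it extends unchanged to decoupled weight decay provided the decay coefficients are rescaled in step with the learning rates.
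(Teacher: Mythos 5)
Your proof is correct and follows essentially the same route as the paper's: the change of variables $\tilde A = \alpha_A A$, $\tilde B = \alpha_B B$, induction on $t$ via the scaled gradients $\nabla_{\tilde A}L = \tfrac{1}{\alpha_A}\nabla_A L$, and for Adam the scale-invariance of $m/\sqrt{v}$ in the $\epsilon\to 0$ limit. If anything, your explicit tracking of the moment buffers (with $m$ scaling linearly and $v$ quadratically, preserved by bias correction) is slightly more careful than the paper's own Adam argument.
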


\textbf{Remarks.} 
This equivalence underscores how hyperparameters collectively regulate update magnitude, effectively reducing the search space for optimal configurations. A striking implication is that, increasing \(\eta_B\) in LoRA+~\cite{lora+} is identical to scaling \(\alpha\) in RsLoRA~\cite{kalajdzievski2023rank}  under the "Noise \& Zeros" initialization, highlighting the critical role of initialization magnitude in shaping LoRA’s training dynamics, which is rarely discussed in prior works.  For the non-zero initialization, it is advisable to first adjust the initialization magnitudes and learning rates for moderate improvements, as modifying \(\alpha\) inherently combines the effects of both and may result in drastic and unpredictable changes.

To demonstrate the joint effect of hyperparameters on LoRA dynamics, we conduct a controlled experiment using a 5-layer MLP with “Noise \& Zeros” initialized LoRA layers, setting the intermediate dimension to 400 and the LoRA rank to 25. The network is trained on synthetic data under various hyperparameter settings, consequently using SGD and Adam optimizers with \(\eta = 5 \times 10^{-5}\).  As shown in Figure~\ref{fig:dynamics}(a), all settings with \(\alpha = 16\) result in identical loss trajectories and parameter evolution, confirming the theoretical predictions.  In contrast, weight updates deviate significantly when using \(\alpha = 1\) with \(\eta\) scaled by 4, indicating that equivalence holds only under specific rules. 

\begin{figure}[t]
    \centering
    \vspace{-2em}
    \includegraphics[width=0.95\linewidth]{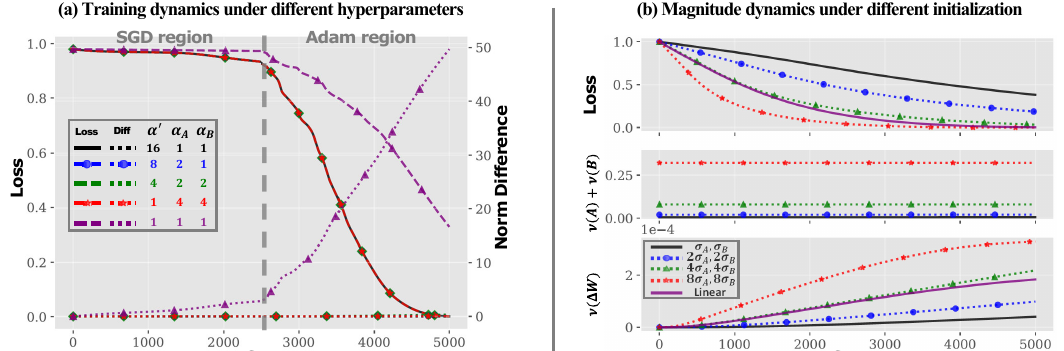}
    \caption{(a) Validation of Proposition~\ref{prop: Parameter Scaling Equivalence}. Each curve represents a model with unique hyperparameters. The norm difference (right axis) aggregates Frobenius norm discrepancies between the baseline model (black) and others across layers. Purple and other curves share identical learning rates but diverge due to differing initialization magnitudes. Equivalent optimization trajectories emerge from diverse hyperparameter combinations under both SGD and Adam optimizers.  (b) Validation of Proposition~\ref{prop:Parameter Magnitude Dynamics brief}. The black curve represents random orthogonal initialization. Parameter magnitudes are predominantly governed by initialization scaling, resulting in smaller weight changes compared to conventional linear layers. This necessitates the magnitude scaling in enhancing LoRA performance.}
    \vspace{-1em}
    \label{fig:dynamics}
\end{figure}

\subsection{Magnitude Limitation Rooted in Low-Rank Structure}
Guided by the established equivalence framework, we fix \(\alpha = 1\) to eliminate the interference of scaling factors, which is the most commonly-used configuration in practical implementation. In the following, we prove initialization magnitudes and other factors critically influence the weight update.
\begin{proposition}[Parameter Magnitude Dynamics]\label{prop:Parameter Magnitude Dynamics brief}
Consider LoRA parameters updated with the same learning rate \(\eta\). Assume: \( A^{(0)} \sim \mathcal{N}(\mathbf{0}, \sigma_A^2 I) \), \( B^{(0)} \sim \mathcal{N}(\mathbf{0}, \sigma_B^2 I) \), \(\nabla_W L^{(t)} \sim \mathcal{N}(\mathbf{0}, \sigma_L^2 I) \), and \(\mathbb{E}[\langle A^{(t)}, \nabla_A L^{(t)} \rangle] = \mathbb{E}[\langle B^{(t)}, \nabla_B L^{(t)} \rangle] = 0\). Under these conditions, the parameter magnitudes \(\boldsymbol{\nu}_t = \left[\mathbb{E}[\nu[A^{(t)}]], \mathbb{E}[\nu[B^{(t)}]] \right]^T\) evolve as a linear dynamical system. Its exponential solution admits the linearized approximation under small-\(\eta\) regime:
\begin{equation}
\scalebox{0.95}{$
  \boldsymbol{\nu}_t = \left(I +  \begin{bmatrix} 0 & \gamma_B \\ \gamma_A & 0 \end{bmatrix}\right)\boldsymbol{\nu}_{t-1} \approx \begin{bmatrix} \sigma_A^2 + t\gamma_B\sigma_B^2 \\ \sigma_B^2 + t\gamma_A\sigma_A^2\end{bmatrix},
$}
\end{equation}
where \(\gamma_A = m\eta^2\sigma_L^2\), \(\gamma_B = n\eta^2\sigma_L^2\). This further yields the evolution of weight update magnitude:
\begin{equation}\label{eq:sigma_dynamics}
\nu[W^{(t)}_{\text{LoRA}}] \approx 
 k_1\gamma t + \mathcal{O}(\gamma^2 t^2),\ \text{where} \ \gamma = \eta^2\sigma_L^2,\  k_1 = r(m\sigma_A^4 + n\sigma_B^4).
\end{equation}
% with  and \(k_2 = rmn\sigma_A^2\sigma_B^2\).

\end{proposition}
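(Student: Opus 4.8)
The plan is to track the expected squared Frobenius norms (equivalently the magnitudes $\nu$) of $A^{(t)}$ and $B^{(t)}$ along the gradient-descent recursion, show they obey a $2\times 2$ linear system, solve that system, and then feed the solution into the per-step update variance of Eq.~\eqref{eq:variance} to obtain $\nu[W^{(t)}_{\text{LoRA}}]$.

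First I would write the updates at $\alpha=1$ as $A^{(t+1)}=A^{(t)}-\eta\,B^{(t)\top}\nabla_W L^{(t)}$ and $B^{(t+1)}=B^{(t)}-\eta\,\nabla_W L^{(t)}A^{(t)\top}$ using Eq.~\eqref{eq:lora_gradient}, and expand $\|A^{(t+1)}\|_F^2=\|A^{(t)}\|_F^2-2\eta\langle A^{(t)},\nabla_A L^{(t)}\rangle+\eta^2\|B^{(t)\top}\nabla_W L^{(t)}\|_F^2$. The cross term has zero expectation by the hypothesis $\mathbb{E}[\langle A^{(t)},\nabla_A L^{(t)}\rangle]=0$; for the quadratic term, conditioning on $B^{(t)}$ and using that $\nabla_W L^{(t)}$ has i.i.d.\ zero-mean variance-$\sigma_L^2$ entries independent of the iterates, a direct entrywise moment computation gives $\mathbb{E}\|B^{(t)\top}\nabla_W L^{(t)}\|_F^2=m\sigma_L^2\,\mathbb{E}\|B^{(t)}\|_F^2$, and symmetrically $\mathbb{E}\|\nabla_W L^{(t)}A^{(t)\top}\|_F^2=n\sigma_L^2\,\mathbb{E}\|A^{(t)}\|_F^2$. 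Dividing the two recursions by the entry counts $rm$ and $nr$ turns them into $\mathbb{E}[\nu[A^{(t)}]]=\mathbb{E}[\nu[A^{(t-1)}]]+\gamma_B\,\mathbb{E}[\nu[B^{(t-1)}]]$ and $\mathbb{E}[\nu[B^{(t)}]]=\mathbb{E}[\nu[B^{(t-1)}]]+\gamma_A\,\mathbb{E}[\nu[A^{(t-1)}]]$ with $\gamma_A=m\eta^2\sigma_L^2$, $\gamma_B=n\eta^2\sigma_L^2$ and $\boldsymbol{\nu}_0=[\sigma_A^2,\sigma_B^2]^\top$, i.e.\ exactly the stated system $\boldsymbol{\nu}_t=(I+N)\boldsymbol{\nu}_{t-1}$ with $N=\left[\begin{smallmatrix}0&\gamma_B\\\gamma_A&0\end{smallmatrix}\right]$.

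To solve it I would use $N^2=\gamma_A\gamma_B I$, so $I+N$ has eigenvalues $1\pm\sqrt{\gamma_A\gamma_B}$ and $\boldsymbol{\nu}_t=(I+N)^t\boldsymbol{\nu}_0$ is a combination of $\boldsymbol{\nu}_0$ and $N\boldsymbol{\nu}_0$ with coefficients built from $(1\pm\sqrt{\gamma_A\gamma_B})^t$ --- the advertised exponential solution --- which for small $\eta$ truncates to $(I+N)^t=I+tN+\mathcal{O}(\|N\|^2 t^2)$, giving $\boldsymbol{\nu}_t\approx[\sigma_A^2+t\gamma_B\sigma_B^2,\ \sigma_B^2+t\gamma_A\sigma_A^2]^\top$. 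For the weight update I would then invoke the product rule $\nu[BA]\approx r\,\nu[B]\,\nu[A]$ from the preliminaries together with the fact that, to leading order in $\eta$, the accumulated update is zero-mean and uncorrelated with the frozen initial product $B^{(0)}A^{(0)}$, so $\mathbb{E}[\nu[W^{(t)}_{\text{LoRA}}]]\approx r\,\mathbb{E}[\nu[A^{(t)}]]\,\mathbb{E}[\nu[B^{(t)}]]-r\sigma_A^2\sigma_B^2$ (the subtraction enforcing $\nu[W^{(0)}_{\text{LoRA}}]=0$); plugging in the linearized $\boldsymbol{\nu}_t$ and expanding, the constant cancels, the coefficient of $t$ is $r(\gamma_A\sigma_A^4+\gamma_B\sigma_B^4)=\eta^2\sigma_L^2\cdot r(m\sigma_A^4+n\sigma_B^4)=k_1\gamma$ with $k_1=r(m\sigma_A^4+n\sigma_B^4)$ and $\gamma=\eta^2\sigma_L^2$, and the leftover $r\gamma_A\gamma_B\sigma_A^2\sigma_B^2\,t^2=\mathcal{O}(\gamma^2 t^2)$ is the stated remainder. (The same leading term also drops out of summing the per-step increments, $\mathbb{E}[\nu[W^{(t)}_{\text{LoRA}}]]\approx\sum_{i<t}\mathbb{E}[\nu[\Delta W^{(i)}_{\text{LoRA}}]]$, once Eq.~\eqref{eq:variance} at $\alpha=1$ is combined with $\mathbb{E}[\nu[\nabla_A L^{(i)}]]\approx n\sigma_L^2\,\mathbb{E}[\nu[B^{(i)}]]$, $\mathbb{E}[\nu[\nabla_B L^{(i)}]]\approx m\sigma_L^2\,\mathbb{E}[\nu[A^{(i)}]]$ and the early-time values $\mathbb{E}[\nu[A^{(i)}]]\approx\sigma_A^2$, $\mathbb{E}[\nu[B^{(i)}]]\approx\sigma_B^2$.)

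The main obstacle is the probabilistic modeling behind the $\approx$ relations rather than the algebra: one has to argue that $\nabla_W L^{(t)}$ may be treated as independent of $A^{(t)},B^{(t)}$ (needed to factor every quadratic expectation), that $\nu[A^{(t)}]$ and $\nu[B^{(t)}]$ concentrate enough to replace $\mathbb{E}[\nu^2]$ by $(\mathbb{E}\nu)^2$, and that the cross terms $\langle A^{(t)},\nabla_A L^{(t)}\rangle$ genuinely average out along the whole trajectory --- this last point is imposed as a hypothesis, but it is an idealization, since parameters and gradients become correlated as training proceeds, so an honest error analysis would have to bound the rate at which that correlation develops (and likewise how fast the product rule and the increment-uncorrelatedness degrade). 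The matrix-power truncation and the early-time freezing $\boldsymbol{\nu}_i\approx\boldsymbol{\nu}_0$ are the remaining, more routine, places where the small-$\eta$ and large-$(m,n)$ assumptions enter.
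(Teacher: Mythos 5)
Your proposal is correct and follows essentially the same route as the paper: expand $\|A^{(t+1)}\|_F^2$, kill the cross term by hypothesis, compute $\mathbb{E}\|B^{\top}\nabla_W L\|_F^2 = m\sigma_L^2\,\mathbb{E}\|B\|_F^2$ (and its counterpart), normalize by entry counts to get the coupled linear system, diagonalize via $\lambda_\pm = 1\pm\sqrt{\gamma_A\gamma_B}$, and truncate to first order before applying $\nu[BA]\approx r\,\nu[A]\,\nu[B]$ minus the initial product. Your write-up is in fact slightly more explicit than the paper's on the last step (the subtraction enforcing $\nu[W^{(0)}_{\text{LoRA}}]=0$ and the identification of the $\mathcal{O}(\gamma^2 t^2)$ remainder), which the paper leaves mostly implicit.
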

\paragraph{Remarks.}
This analysis uncovers essential properties of LoRA initialization and optimization trajectory. First, since $\gamma_A$ and $\gamma_B$ are very small values ($\ll 1$), the magnitudes of parameters $A^{(t)}$ and $B^{(t)}$ remains nearly unchanged throughout training, potentially constraining the representation capacity of the learned model. Moreover, unlike full-parameter tuning, which evolves at a linear rate of \( \gamma \), \textit{the low-rank structure introduces a proportional factor \( k_1 \), significantly slowing updates}. For instance, the naive “Noise \& Zeros” initialization yields \( k_1 = \frac{r}{m} \), while the dimension $m$ in large models like LLaMA~\cite{touvron2023llama} is in the thousands or more. Despite the quadratic term accelerates growth, small gradients may temper this effect in the later training stages.

Figure~\ref{fig:dynamics}(b) visualizes LoRA magnitude dynamics during training. We use the same network as in Figure~\ref{fig:dynamics}(a) but apply a nonzero initialization with \(\sigma_A =\sigma_B=\frac{1}{20}\).  We explore a regular MLP with the same magnitude denoted as “Linear”, and a group of networks with larger initialization magnitudes. Notably, while the loss decreases significantly, the magnitudes of \(A\) and \(B\) remain nearly unchanged throughout training. The magnitude evolution of \(W\) reveals that the basic LoRA with theoretically \(k_1 = \frac{1}{16}\), grows substantially slower than the regular MLP. Increasing the initialization magnitude effectively accelerates the growth of \(W\), aligning with our theoretical analysis.

Integrating analyses in this section, we derive a \textbf{magnitude principle} for LoRA development: 
% \begin{tcolorbox}[colback=gray!5!white, colframe=gray!75!black, title=Magnitude Principle]
\begin{tcolorbox}[colback=gray!5!white, colframe=gray!75!black]
\textit{A valid improvement to LoRA convergence will enhance weight update magnitude $\nu[W_{\text{LoRA}}]$}.  
\end{tcolorbox}
As shown in Proposition \ref{prop:Parameter Magnitude Dynamics brief}, magnitude principle could unify and explain improvement factors in existing works, 
including the learning rate, scaling factor, gradients, rank and initialization schemes.

\section{Demystifying Spectral Gains with Magnitude Principle}

The sheer scale of modern neural networks complicates the determination of optimal LoRA initialization across layers. Drawing inspiration from recent spectral initialization methods~\cite{PiSSA,MiLoRA,corda,lora-ga,lora-one}, which have demonstrated improved convergence and task performance, we reinterpret their effectiveness through the lens of the magnitude principle and introduce a magnitude-driven initialization method called LoRAM. Notably, these methods requires extra SVD computations and storage, leading to increased resource overhead and implementation complexity. In contrast, LoRAM mitigates these drawbacks and achieves even better performance. In the following, we take the seminal work PiSSA~\cite{PiSSA} as a representative baseline and ablate other methods in  experiments (see Section~\ref{sec:ablation}). 

\subsection{Magnitude Gain in Spectral Initialization}
The PiSSA method~\cite{PiSSA} initializes LoRA using the spectral decomposition of pretrained weight matrix \(W = USV^\top\), which has a rank of \(\mathcal{R}[W]\). The spectral initialization is defined as:  
\begin{equation}
\scalebox{1.05}{$
A^{(0)}=A_{\mathrm{SVD}} = \sqrt{S_r} V_{:, :r}^\top, \quad B^{(0)}=B_{\mathrm{SVD}} = U_{:,:r} \sqrt{S_r},
$}
\end{equation}
where \(S_r \in \mathbb{R}^{r \times r}\) contains the top-\(r\) singular values, and \(U \in \mathbb{R}^{n \times n}\), \(V \in \mathbb{R}^{m \times m}\) are the left and right singular vector matrices.  While prior works~\cite{PiSSA,MiLoRA,corda,mao2025survey} attribute PiSSA’s success to its ability to preserve principal components, we show that its key advantage lies in singular value weighting. By redistributing dominant variance into the initialization, PiSSA facilitates adaptive magnitude updates across layers, accelerating convergence.  

Consider the statistics of the top-\(r\) singular values, we define the spectral concentration factor as:
\begin{equation}\label{eq:spectral concentration factor}
\scalebox{1.05}{$
   \rho[r] \triangleq \frac{\mathbb{E}_r[s]^2}{\mathbb{E}_{\mathcal{R}[W]}[s^2]} = \frac{\left(\frac{1}{r}\sum_{i=1}^r s_i\right)^2}{\frac{1}{\mathcal{R}[W]}\sum_{i=1}^{\mathcal{R}[W]} s_i^2}.
$}
\end{equation}
This factor captures the concentration of energy in the top-$r$ singular values. We then reformulate 
\begin{equation}\label{eq: pissa magnitude}
\scalebox{1.05}{$
    \nu(A_{\mathrm{SVD}}) = \mathbb{E}_r[s]\nu[V_{:,:r}] = \sqrt{\frac{n\rho[r]\nu[W]}{m\mathcal{R}[W]}}, \ \nu(B_{\mathrm{SVD}}) = \mathbb{E}_r[s]\nu[U_{:,:r}] = \sqrt{\frac{m\rho[r]\nu[W]}{n\mathcal{R}[W]}}.
    $}
\end{equation}
Essentially, \(\rho[r]\) acts as a scaling factor that redistributes variance from the pretrained weight matrix, influencing the magnitude of updates during training.  Since \(\rho[r]\) monotonically decreases with \(r\), its impact is more pronounced for smaller \(r\), making it particularly relevant for LoRA applications. 

\begin{figure}[t]
\vspace{-1em}
\includegraphics[width=1.0\linewidth]{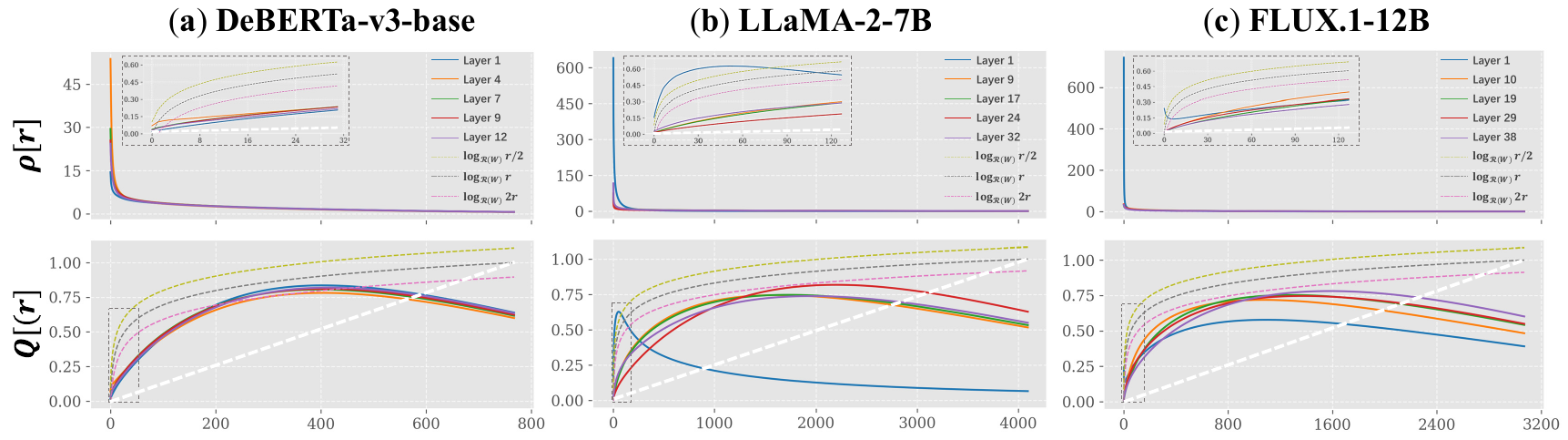}
    \caption{Illustration of spectral gain factor $Q[r]$ defined in Eq.~\eqref{eq:spectral gain factor} and spectral concentration factor $\rho[r]$ defined in Eq.~\eqref{eq:spectral concentration factor} across DeBERTa-v3-base~\cite{he2021debertav3},  LLaMA-2-7B~\cite{touvron2023llama} and FLUX.1-12B~\cite{flux2024}.
     Values are computed from uniformly sampled layers. The white dotted line represents the linear growth rate of naive LoRA weight magnitudes, while spectral initialization exhibits faster growth. Due to its concave nature, we approximate the spectral gain factor using a logarithmic function.}
    \label{fig:rho}
    \vspace{-1em}
\end{figure}

\textbf{Spectral Gain Factor.}
Taking the above magnitudes into the dynamics in Eq.~\eqref{eq:sigma_dynamics} further derives:
\begin{equation}\label{eq:spectral gain factor}
\scalebox{1.05}{$
    k_1 = Q[r](m+n)\nu[W], \quad 0 \leq  Q[r]\triangleq\frac{\rho[r]r}{\mathcal{R}[W]} \leq 1.
    $}
\end{equation}
Given that \(\nu[W] \sim \mathcal{O}(\min(\frac{1}{m}, \frac{1}{n}))\), this results in a gain factor of at least \(Q[r]\), which we term as the “spectral gain factor”.
As shown in Figure~\ref{fig:rho}, the spectral gain factor $Q[r]$ exhibits bounded variation in $[0,1]$ with characteristic concavity, which can be formally derived via Jensen’s inequality.  Although SVD components are not completely independent preventing the theoretical monotonic increase in $Q[r]$, this concavity also suggests that the gain effect is more pronounced when \(r\) is small, reinforcing the effectiveness of spectral initialization for LoRA in the parameter-efficient manner.

\subsection{Efficient Magnitude-driven Initialization with LoRAM}
We propose LoRAM to achieve similar magnitude update rate in Eq.~\eqref{eq:spectral gain factor} like PiSSA while eliminating spectral computation. 
As depicted in Algorithm~\ref{algorithm}, 
LoRAM initializes the parameter matrices as:
\begin{equation}
\scalebox{1.0}{$
 A^{(0)}=A_{\mathrm{LoRAM}} = \beta\cdot \Phi_m^\top, \quad B^{(0)}=B_{\mathrm{LoRAM}} = \beta\cdot \Phi_n,\quad \beta = \left(\frac{Q[r]\cdot \nu[W]}{\nu\left [\Phi_n\Phi_m^\top \right ]}\right)^{\frac{1}{4}}.
 $}
\end{equation}
Here $\Phi_n$ and $\Phi_m$ denote the first $r$ columns of an $n$- and $m$-dimensional orthogonal basis matrices, respectively. Given that $\nu[\Phi_n] = \frac{1}{n}$ and $\nu[\Phi_m] = \frac{1}{m}$, LoRAM achieves the similar magnitude as PiSSA with $k_1 =  Q[r](m+n)\nu[W].
$
We can also derive $\nu[B^{(0)}A^{(0)}]= Q[r]\nu[W]$, implying LoRAM inherently ensures numerical stability and moderate corrections to the pretrained weight.

\textbf{Logarithmic Gain Factor.}  
Due to the concave nature of \( Q[r] \), we approximate its analytical form using an asymptotic expansion: $Q[r] \approx \log_{\min(n,m)}(r).$  As illustrated in Figure~\ref{fig:rho}, this logarithmic function effectively captures the monotonic increase nature, providing a predictable improvement than LoRA and PiSSA particularly when using a small rank \( r \).

\textbf{Deterministic Basis Matrix.}  
To eliminate the need for storing initialization buffers, we adopt an analytic approach instead of random initialization. Specifically, we employ the Discrete Sine Transform (DST) basis due to its simplistic mathematical definition:  
\begin{equation}~\label{eq:DST}
\scalebox{1.1}{$
    \Phi_m[i,j] = \sqrt{\frac{2}{m+1}} \sin\left(\frac{(i+1)(j+1)\pi}{m+1}\right), \quad  0 \leq i,j < m.
$}
\end{equation}  
This formulation constructs orthogonal matrices of arbitrary dimensions, ensuring reproducibility across different devices while providing provable statistical properties: \( \mathbb{E}[\Phi_m] = 0 \) and \( \nu[\Phi_m] = \frac{1}{m} \).  One may wonder if randomness is required for initialization, we find it unnecessary in LoRA. In fact, DST even slightly outperforms random strategies in our ablation experiments (see Section~\ref{sec:ablation}).

\textbf{Efficiency and Compatibility.} Since $\beta$ and $\Phi$ avoid complex matrix operations, LoRAM retains the efficiency and storage footprint of naive LoRA. As it only modifies initialization, LoRAM remains plug-and-play, integrating seamlessly into any pipeline that supports standard LoRA. This is especially valuable for modern large models built on fixed and highly optimized frameworks. In contrast, other initialization methods require costly preprocessing, such as matrix generation~\cite{lora-ga,corda} or decomposition~\cite{PiSSA,olora,MiLoRA}, which complicates adoption in standard workflows.

\begin{figure}[t]
% \vspace{-1em}
\begin{algorithm}[H]
\caption{LoRAM Initialization Procedure}\label{algorithm}
\SetAlgoLined
\DontPrintSemicolon
\KwIn{Pretrained weight $W \in \mathbb{R}^{n \times m}$, target rank $r$}
\KwOut{Initialized parameters $A^{(0)}, B^{(0)},W$}
\vspace{0.2cm}
\ \ $\Phi_n$, $\Phi_m \gets$  \text{get\_basis(n, r)}, \text{get\_basis(m, r)} \Comment{Generate basis matrices, \textit{e.g.}, Eq~\eqref{eq:DST}} \\
\quad $\beta \gets \left(\frac{\log_{\min(n,m)}(r) \cdot \nu[W]}{\nu[\Phi_n\Phi_m^\top]}\right)^{1/4}$ \Comment{Compute magnitude gain factor} \\
\quad $B^{(0)}, A^{(0)}, W \gets \beta \cdot \Phi_n, \beta \cdot \Phi^\top_m, W-\beta^2\cdot \Phi_n\Phi^\top_m$ \Comment{Initialize parameters}
\end{algorithm}
\vspace{-1em}
\end{figure}

\section{Experiments}
\label{sec:experiments}
We conduct comprehensive experiments to evaluate LoRAM efficiently implemented via the PEFT library~\cite{peft}. Following conventional settings~\cite{PiSSA,MiLoRA,lora+}, we assess performance on language tasks and extend the evaluation to vision-language tasks, demonstrating LoRAM's generalization across diverse models and modalities. All experiments are run on servers with 8 NVIDIA H800 GPUs. 

\textbf{Baselines.} While extensive research on LoRA has explored aspects like structural modifications and rank control, these directions are largely orthogonal to our focus on hyperparameter analysis within the naive LoRA framework. In line with this, we compare LoRAM with the naive LoRA~(ICLR 2022)~\cite{Hu2021LoRALA}, as well as several representative hyperparameter tuning strategies. We first consider weight-driven initialization~(marked “$\S$”), including PiSSA~(NeurIPS 2024)~\cite{PiSSA}, which uses the top-$r$ singular vectors and values of pre-trained weights; MiLoRA~(NAACL 2025)~\cite{MiLoRA}, which utilizes the last $r$ singular vectors and values; and OLoRA~\cite{olora}, which applies orthogonal initialization via QR decomposition. All these methods adopt a fixed scaling factor $\alpha = 1$. We then include RsLoRA~\cite{kalajdzievski2023rank}~(marked “$\dagger$”), which enhances performance by setting $\alpha = \sqrt{r}$, and LoRA+~(ICML 2024)~\cite{lora+}~(marked “$\ddagger$”), which increases the learning rate with the recommended $\eta_B = 4\eta_A$. We  also evaluate data-driven initialization in the ablation study, including LoRA-GA~(NeurIPS 2024)~\cite{wang2024lora} and CorDA~(NeurIPS 2024)~\cite{corda}, which require extra pipeline to leverage training data information. Most of these baselines have been integrated and validated in the PEFT library.

\begin{table}[t]
\vspace{-2em}
  \centering
  \renewcommand{\arraystretch}{0.85}
  \setlength{\tabcolsep}{5.5pt}
\caption{Comparison of LoRAM versus hyperparameter tuning baselines on NLG tasks. Experiments conducted with  LLaMA2-7B model using two ranks, reporting mean $\pm$ std results (\%) over three runs. Bold and underlined values represent the best and second-best performances, respectively.  }\label{table:model_performance}
  \small
    \begin{tabularx}{\textwidth}{@{}cccccccc@{}}
      \toprule
      \textbf{Rank} &  \textbf{$\#$Param} &  \textbf{Method} & \textbf{GSM8K} & \textbf{MATH} & \textbf{HumanEval} & \textbf{MBPP} & \textbf{Commonsense} \\
      \midrule
      N/A & 6738M & Full FT & 60.34 \text{\scriptsize$\pm$ 1.32} & 11.74 \text{\scriptsize$\pm$ 0.63} & 32.30 \text{\scriptsize$\pm$ 1.26} & 39.27 \text{\scriptsize$\pm$ 1.01} & 79.20 \text{\scriptsize$\pm$ 1.20} \\
      \midrule
      \multirow{5}{*}{16} & \multirow{5}{*}{ 40M} 
                    & LoRA   & 31.51 \text{\scriptsize$\pm$ 0.31} & 4.16 \text{\scriptsize$\pm$ 0.27} & 15.98 \text{\scriptsize$\pm$ 0.20} & \underline{28.65} \text{\scriptsize$\pm$ 0.47} & 66.56 \text{\scriptsize$\pm$ 1.21} \\
                    & & RsLoRA$^{\dagger}$ & \underline{39.04} \text{\scriptsize$\pm$ 0.53} & 4.94 \text{\scriptsize$\pm$ 0.40} & \underline{18.85} \text{\scriptsize$\pm$ 0.66} & 28.10 \text{\scriptsize$\pm$ 0.64} & 73.24 \text{\scriptsize$\pm$ 0.84} \\
                    & & LoRA+$^{\ddagger}$ & 31.69 \text{\scriptsize$\pm$ 0.64} & 3.98 \text{\scriptsize$\pm$ 0.38} & 18.54 \text{\scriptsize$\pm$ 0.52} & 28.00 \text{\scriptsize$\pm$ 0.81} & 72.19 \text{\scriptsize$\pm$ 1.43} \\
                    & & MiLoRA$^\S$ & 29.70 \text{\scriptsize$\pm$ 0.42} & 4.18 \text{\scriptsize$\pm$ 0.21} & 14.69 \text{\scriptsize$\pm$ 0.66} & 27.23 \text{\scriptsize$\pm$ 0.53} & 67.90 \text{\scriptsize$\pm$ 1.20} \\
                    & & OLoRA$^\S$  & 35.83 \text{\scriptsize$\pm$ 0.58} & 4.80 \text{\scriptsize$\pm$ 0.53} & 16.58 \text{\scriptsize$\pm$ 0.38} & 27.44 \text{\scriptsize$\pm$ 0.76} & 73.48 \text{\scriptsize$\pm$ 1.09} \\
                    & & PiSSA$^\S$  & {37.68} \text{\scriptsize$\pm$ 0.45} & \underline{5.16} \text{\scriptsize$\pm$ 0.41} &  {18.37}  \text{\scriptsize$\pm$ 0.49} &  {28.62}  \text{\scriptsize$\pm$ 0.68} &  \underline{73.72} \text{\scriptsize$\pm$ 1.05}\\
                    & & LoRAM$^\S$  & \textbf{40.32} \text{\scriptsize$\pm$ 0.43} & \textbf{5.30} \text{\scriptsize$\pm$ 0.37} & \textbf{18.92} \text{\scriptsize$\pm$ 0.55} & \textbf{28.83} \text{\scriptsize$\pm$ 0.63} & \textbf{75.19} \text{\scriptsize$\pm$ 1.10}\\
      \midrule
      \multirow{5}{*}{128} & \multirow{5}{*}{320M}
                    & LoRA   & 40.27 \text{\scriptsize$\pm$ 0.70} & 4.72 \text{\scriptsize$\pm$ 0.43} & 20.11 \text{\scriptsize$\pm$ 0.32} & 28.84 \text{\scriptsize$\pm$ 0.37} & 73.64 \text{\scriptsize$\pm$ 1.13} \\
                    & & RsLoRA$^{\dagger}$ & 50.38 \text{\scriptsize$\pm$ 0.37} & \textbf{7.32} \text{\scriptsize$\pm$ 0.28} & 21.32 \text{\scriptsize$\pm$ 0.70} & 30.73 \text{\scriptsize$\pm$ 0.43} & 77.01 \text{\scriptsize$\pm$ 1.17} \\
                    & & LoRA+$^{\ddagger}$ & 40.41 \text{\scriptsize$\pm$ 0.67} & 5.28 \text{\scriptsize$\pm$ 0.53} & 20.71 \text{\scriptsize$\pm$ 0.88} & 29.13 \text{\scriptsize$\pm$ 0.78} & \underline{78.19} \text{\scriptsize$\pm$ 1.33} \\
                    & & MiLoRA$^\S$ & 39.81 \text{\scriptsize$\pm$ 0.89} & 5.18 \text{\scriptsize$\pm$ 0.58} & 20.39 \text{\scriptsize$\pm$ 0.21} & 29.95 \text{\scriptsize$\pm$ 1.05} & 74.29 \text{\scriptsize$\pm$ 1.09} \\
                    & & OLoRA$^\S$  & 50.10 \text{\scriptsize$\pm$ 0.64} & 7.01 \text{\scriptsize$\pm$ 0.56} & 20.72 \text{\scriptsize$\pm$ 0.67} & 30.21 \text{\scriptsize$\pm$ 0.89} & \textbf{78.61} \text{\scriptsize$\pm$ 0.97} \\
                    & & PiSSA$^\S$  & \textbf{51.48} \text{\scriptsize$\pm$ 0.77} & {7.04} \text{\scriptsize$\pm$ 0.54} & \underline{21.62} \text{\scriptsize$\pm$ 0.48} & \underline{31.07} \text{\scriptsize$\pm$ 0.68} & 77.28 \text{\scriptsize$\pm$ 0.98} \\
                    & & LoRAM$^\S$  & \underline{51.12} \text{\scriptsize$\pm$ 0.73} & \underline{7.25} \text{\scriptsize$\pm$ 0.68} & \textbf{22.03} \text{\scriptsize$\pm$ 0.56} & \textbf{31.53} \text{\scriptsize$\pm$ 0.72} & {77.81} \text{\scriptsize$\pm$ 0.96} \\
      \bottomrule
    \end{tabularx}
\vspace{-1em}
\end{table}

\begin{table}[t]
\centering
\small
\renewcommand{\arraystretch}{0.85}
\setlength{\tabcolsep}{6.6pt}
\caption{Comparison of LoRAM versus hyperparameter tuning baselines on GLUE benchmark. Experiments conducted with the DeBERTa-v3-base model using rank 8, reporting mean results over three runs. Bold and underlined values represent the best and second-best performances, respectively. }
\begin{tabularx}{\textwidth}{cccccccccc}
\toprule
\textbf{Method} & \textbf{$\#$Param} & \textbf{MNLI} & \textbf{SST-2} & \textbf{MRPC} & \textbf{CoLA} & \textbf{QNLI} & \textbf{QQP} & \textbf{RTE} & \textbf{STS-B} \\ \midrule
Full FT & 184M & 88.31 & 93.57 & 89.46 & {67.26} & 92.80 & {91.52} & {83.75} & 86.87 \\ \midrule
LoRA    &  1.33M  & {90.23} & \textbf{95.87} & 84.06 & 63.56 & {93.88} & 90.55 & 50.18 & 87.20 \\
RsLoRA$^{\dagger}$    &  1.33M  & {90.33} & {95.64} & 86.38 & 64.85 & {93.97} & 90.26 & 60.31 & 88.37 \\
LoRA+$^{\ddagger}$    &  1.33M  & \underline{90.37} & {95.32} & 87.54 & 64.79 & \textbf{94.32} & 90.93 & 65.37 & \underline{89.20} \\
MiLoRA$^\S$    &  1.33M  & 90.28 &  \underline{95.75} & 87.00 & {62.58} & 93.97 & 90.83 & 54.87 & 87.74 \\
OLoRA$^\S$    &  1.33M  & 90.19 & {94.83} & 88.72 & \textbf{65.59} & 93.36 & 90.74 & 74.09 & 88.52 \\
PiSSA$^\S$   &   1.33M & \textbf{90.38} & {95.64} & \underline{89.21} & {65.06} & {93.84} & \underline{91.35} & \underline{74.36} & {88.90} \\
LoRAM$^\S$   &   1.33M & ${90.34}$ & {95.29} & \textbf{89.95} & \underline{65.53} & \underline{94.08} & \textbf{91.70} & \textbf{{74.72}} & \textbf{89.93} \\
\bottomrule
\end{tabularx}
\label{table:NLU comparation}
\vspace{-1em}
\end{table}

\begin{table}[t]
\vspace{-2em}
\centering
\renewcommand{\arraystretch}{0.85}
\setlength{\tabcolsep}{5pt}
\caption{Comparing LoRAM with other LoRA variants on LLaVA for multimodel tasks. Bold and underlined values indicate the top and second-best performances, respectively. }
\footnotesize
\begin{tabular}{cccccccccc}
\toprule
\textbf{Method} & $\textbf{MME}_\text{Cog}$ & $\textbf{MME}_\text{Per}$ & \textbf{MMMU} & \textbf{AI2D} & \textbf{ChartQA} & \textbf{OCRBench} & \textbf{TextVQA} & \textbf{ScienceQA} \\ \midrule
Full FT & 280 & 1541 & 0.355 & {0.583} & 0.251 & 0.361 & 0.597 & 0.722 \\ \midrule
LoRA      & {278} & {1402} & 0.331 & 0.557 & 0.231 & 0.333 & 0.536 & 0.684 \\
RsLoRA$^{\dagger}$      & {274} & {1385} & 0.334 & \textbf{0.573} & 0.227 & 0.328 & 0.539 & \underline{0.694}\\
LoRA+$^{\ddagger}$      & {283} & {1389} & 0.341 & 0.565 & 0.229 & 0.331 & 0.545 & 0.690 \\
MiLoRA$\S$      & {285} & {1354} & 0.340 & 0.564 & 0.220 & 0.335 & 0.536 & 0.681 \\
OLoRA$\S$      & {288} & {1404} & \underline{0.345} & {0.565} & 0.228 & 0.330 & 0.540 & 0.677 \\
PiSSA$\S$     &\textbf{311} &  \textbf{1411} & {0.344} & {0.564} & \underline{0.232} & \textbf{0.338} & \underline{0.547} & {0.686} \\
LoRAM$\S$   & $\underline{308}$ & \underline{1406} & \textbf{0.350} & \underline{0.571} & \textbf{0.238} & \underline{0.336} & \textbf{0.551} & \textbf{0.700} \\
\bottomrule
\end{tabular}
\label{table:llava}
\vspace{-1em}
\end{table}

\begin{figure}[t]
    \centering
    \includegraphics[width=1.0\linewidth]{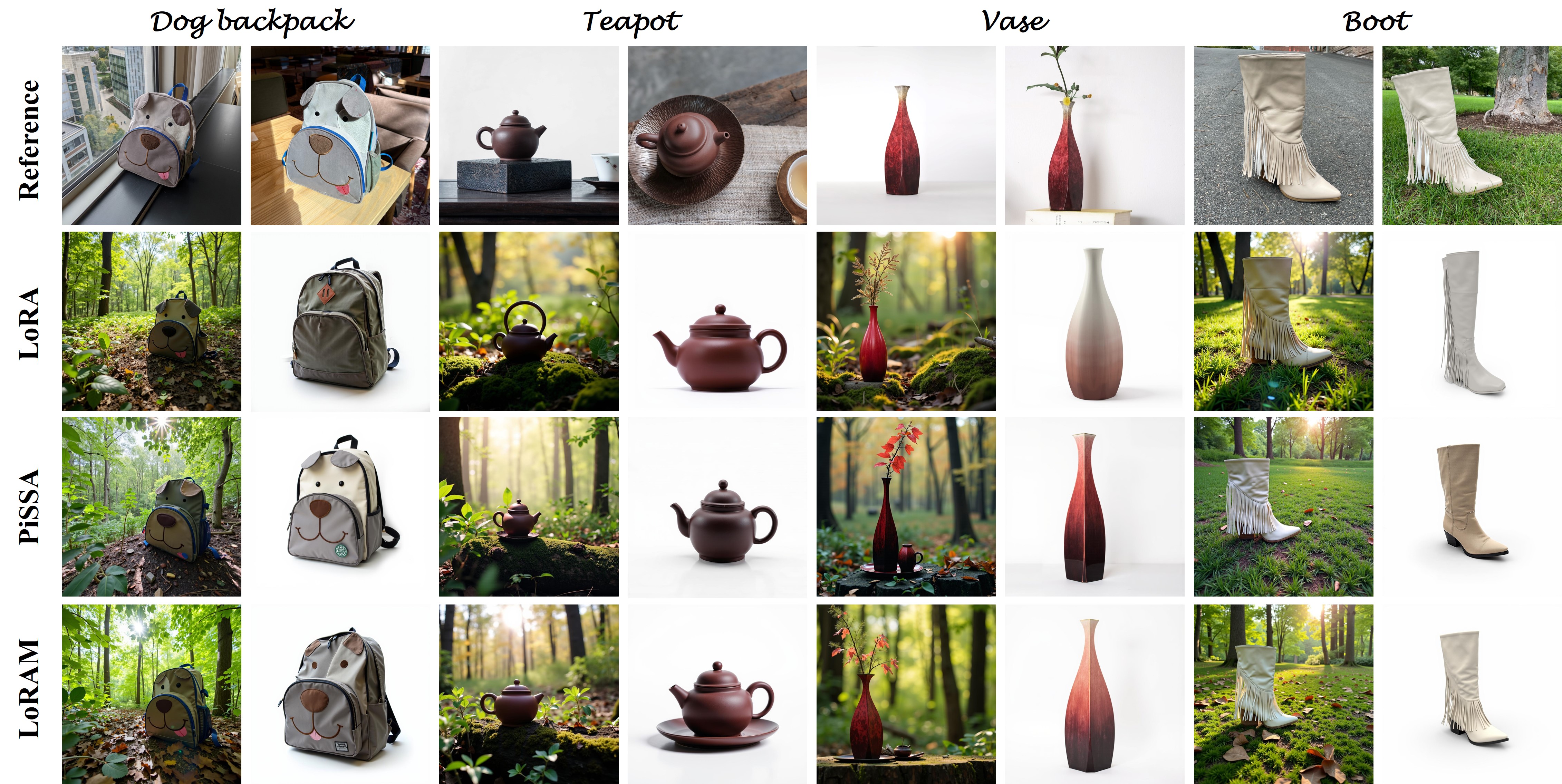}
    \caption{Comparison of LoRA, PiSSA, and LoRAM on image customization task. Experiments conducted with the state-of-the-art FLUX.1-12B model using rank 8. }
    \label{fig:flux}
    \vspace{-1em}
\end{figure}

\subsection{Evaluating the Performance on Natural Language Tasks}
\label{section:various_tasks}
\textbf{Nature Language Generation (NLG).}
As shown in Table~\ref{table:model_performance}, we conduct supervised fine-tuning of LLaMA 2-7B~\cite{touvron2023llama} on math, coding, and commonsense reasoning tasks. Our setup strictly follows PiSSA~\cite{PiSSA}, using the AdamW optimizer with a batch size of 128, a learning rate of $2 \times 10^{-5}$, a warmup ratio of 0.03, and no weight decay. All experiments are performed on subsets containing 100K data points for one epoch to minimize training overhead.
For math tasks, the model is tuned on MetaMathQA~\cite{yu2023metamath} and evaluated on GSM8K~\cite{cobbe2021gsm8k} and MATH~\cite{yu2023metamath} validation sets.  
For coding tasks, we use CodeFeedback~\cite{zheng2024opencodeinterpreter} as training dataset, with evaluations  on HumanEval~\cite{chen2021evaluating} and MBPP~\cite{austin2021program}.  
For commonsense tasks, model is tuned on Commonsense170K~\cite{hu2023llmadapters}, and we report averaged accuracy on eight sub-datasets.
The results in Table~\ref{table:model_performance} demonstrate that LoRAM consistently outperforms LoRA variants across diverse tasks and rank settings, without requiring matrix decomposition.

\textbf{Nature Language Understanding (NLU).} 
We evaluate the NLU performance by fine-tuning the DeBERTa-v3-base model~\cite{he2021debertav3} with a rank of 8 on eight tasks in the GLUE benchmark~\cite{Wang2018GLUEAM}. We utilize scripts from the Transformers Library~\cite{wolf-etal-2020-transformers} to ensure a fair comparison.  All methods are trained with a learning rate of $1\times10^{-4}$ for 3 training epochs, except for MRPC, which uses 5 epochs due to its smaller size. We report overall matched and mismatched accuracy on MNLI, Matthew’s correlation on CoLA, Pearson correlation on STS-B, and accuracy on the other datasets. As shown in Table~\ref{table:NLU comparation}, LoRAM achieves competitive performance against PiSSA across most tasks.

\subsection{Evaluating the Performance on Vision-Language Tasks}  
\textbf{Text-to-image synthesis.}  
We adapt the advanced FLUX.1-12B~\cite{flux2024} to address the image customization task, implementing LoRA, PiSSA, and LoRAM under identical configurations: a learning rate of $1 \times 10^{-4}$, a batch size of 1 and 1,000 iterations. We set the rank to 8, optimizing 9.3 million parameters while maintaining computational efficiency on a single GPU. The training data and prompt template adhere to DreamBooth’s protocol~\cite{ruiz2023dreambooth}.  
Qualitative results in Figure~\ref{fig:flux} demonstrate that LoRAM exhibits marginally superior performance in detail fidelity compared to PiSSA.

\textbf{Image-to-text generation.} 
Following the pipeline of LLaVA~\cite{LLaVA}, we employ CLIP-ViT-L/14~\cite{radford2021learning} as the visual encoder, Vicuna-13B~\cite{zheng2023judging} as the text decoder, and a new visual resampler~\cite{guo2024llava} as the connector. 
In the pre-training stage, we fine-tune only the perceiver resampler using the CC-595K dataset~\cite{LLaVA} for one epoch. During the subsequent instruction-tuning stage, we fine-tune both Vicuna and the resampler using a 656K mixture dataset~\cite{LLaVA}. The learning rate is set to $2 \times 10^{-5}$, and the batch size is 128. We follow the official implementation and use a rank of $64$. As shown in Table~\ref{table:llava}, LoRAM achieves favorable performance across multiple multimodal benchmarks.

\textbf{Training curves.} We provide representative training loss curves of diverse initialization methods in Figure~\ref{fig:loss curves}. It can be noticed that LoRAM is able to have a faster convergence rate in the early stages compared to other LoRA variants and incur smaller losses in the end.

\begin{figure}[t]
\vspace{-2em}
    \centering
    \includegraphics[width=1.0\linewidth]{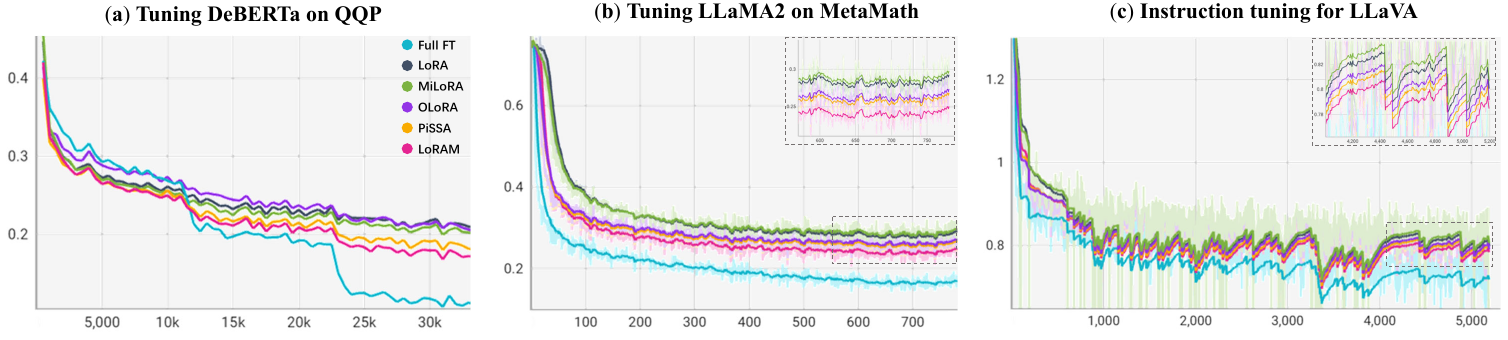}
    \caption{Illustration of training loss curves. LoRAM achieves comparative convergence dynamics to PiSSA across diverse models and benchmarks. See tables and texts for the evaluation results.  }
    \label{fig:loss curves}
    \vspace{-1em}
\end{figure}

\begin{table}[t]
\centering
\footnotesize
\setlength{\tabcolsep}{4pt} 
\caption{Results of ablation study. ``Orth.'' denotes the random orthogonal initialization. The left and right sides of the slash indicate the results of the method and the tracking mode, respectively. The average value is calculated over all the ranks and tasks to compare the overall trend of change.  }
\begin{adjustbox}{max width=\textwidth}
\begin{tabular}{cc|ccc|cc|cc|cc|cc}
\toprule
\multirow{2}{*}{\textbf{Rank}} & \multirow{2}{*}{\textbf{Task}} & \multicolumn{3}{c|}{{$\mathbf{Q[r]}$}} & \multicolumn{2}{c|}{\textbf{Basis}} & \multicolumn{2}{c|}{\textbf{Weight-driven}} & \multicolumn{2}{c|}{\textbf{Data-driven}} & \multicolumn{2}{c}{+ \textbf{RsLoRA}} \\ 
\cmidrule(lr){3-5} \cmidrule(lr){6-7} \cmidrule(lr){8-9} \cmidrule(lr){10-11} \cmidrule(l){12-13}
                              & & $\log \frac{r}{2}$ & $\log r^{*}$ & $\log 2r$ & Orth.  & Gaussian & MiLoRA & PiSSA & CorDA & LoRA-GA & LoRA-GA & LoRAM \\ 
\midrule
\multirow{4}{*}{16}   & MATH                             &5.08        &  5.30 & 5.18 &  4.74  & 4.62  & 4.18 / 4.05       & 5.16 / 5.10       & 4.60 / 4.14      & 5.73 / 3.76         & 7.94     & 7.22    \\ 
                      & GSM8k                                &40.1    &   40.3  & 40.7 &  36.3 & 35.8 & 29.7 / 29.6     & 37.6 / 36.7     & 36.2 / 30.7    & 45.7 / 30.9     & 51.5    & 52.1      \\ 

& MBPP               &28.8 &        28.8      & 28.3   &        28.6      & 27.5        & 27.2 / 27.8      & 28.6 / 29.1      & 25.7 / 28.3    & 28.3 / 27.8      & 33.9     & 31.5    \\ 
  & HumanEval                             &17.1  &   18.9 & 17.1  &  17.7 & 17.7 & 14.6 / 14.7      & 18.3 / 17.3     & 15.2 / 15.2    & 19.5 / 17.7         & 22.0     & 18.3    \\
\midrule
        
\multirow{4}{*}{128}   & MATH            &7.52   &           7.25       & 7.51   &           7.40       & 7.62    & 5.18 / 5.04       & 7.04 / 6.95       & 6.26 / 5.04      & 9.18 / 7.32         & 9.08     & 11.1    \\ 
 & GSM8k              &50.7  &             51.1    &  50.4 &             50.2    &  49.8     & 39.8 / 39.2     & 51.4 / 49.5     & 44.5 / 40.3    & 54.4 / 50.8     & 53.6    & 59.4      \\ 
& MBPP              &31.5   &       31.5     &     32.3  &   31.3 & 32.8        & 29.9 / 30.2      & 31.0 / 29.6      & 29.6 / 29.9    & 32.0 / 30.2      & 31.7     & 38.1    \\
  & HumanEval                             &20.7  &   22.0 & 22.6 &   23.2 & 22.6 & 20.3 / 19.5      & 21.6 / 20.3      & 20.1 / 18.9    & 24.4 / 20.7         & 26.8     & 31.7   \\ \midrule
\multicolumn{2}{c|}{\textbf{Average value}} &25.1  &   25.5 & 25.6 &   24.9 & 24.8 & 21.1 / 21.2      & 25.0 / 24.3      & 22.7 / 21.6    & 27.4 / 23.6         & 29.5     & 31.1 \\
  \bottomrule
\end{tabular}
\label{table:ablation}
\end{adjustbox}
\vspace{-1em}
\end{table}

\subsection{Ablating the Magnitude Principle in LoRA improvements}\label{sec:ablation}

We conduct ablation experiments on LLaMA-2-7B~\cite{touvron2023llama} under the NLG setting, focusing on the effect of magnitude gain factor, the choice of basis matrix, and the validation of the magnitude principle.

\textbf{Magnitude Gain Factor.} 
As shown in Table~\ref{table:ablation}, we first evaluate different values of $Q[r]$ and observe that increasing its value slightly improves performance. We further introduce a “tracking mode” that adjusts $\beta$ in Algorithm~\ref{algorithm} based on the initialization magnitudes from reference methods. Specifically, we set
$\beta = \sqrt{\frac{\mathbb{V}[ B_{\text{ref}}A_{\text{ref}}]}{\mathbb{V}\left [\Phi_n\Phi_m^\top \right ]}}$,
where \( B_{\text{ref}} \) and \( A_{\text{ref}} \) are initialization matrix using weight-driven~\cite{PiSSA,MiLoRA} or data-driven approaches~\cite{corda,lora-ga}. Under this mode, LoRAM essentially matches the performance of prior methods, confirming that magnitudes govern its performance. We also observe that PiSSA, which selects the top-\( r \) singular values, outperforms MiLoRA, which selects the last \( r \) singular values. This indicates that leveraging the large dominant singular values enhances performance. 

\textbf{Basis Matrix.} 
We find that the choice of basis matrix generally has limited impact. For instance, replacing the DST basis with a random orthogonal or Gaussian matrix just results in only minor performance degradation. In tracking mode, substituting the SVD-derived basis with DST does not significantly impact performance. A notable exception is LoRA-GA~\cite{lora-ga}, which approximates the full-parameter gradient at initialization. Nonetheless, we emphasize that tracking mode fails not due to incorrectness of magnitude principle, but because the LoRA-GA matrix form maximizes gradient magnitudes of Eq.~\eqref{eq:lora_gradient}, making it irreplaceable by alternatives and validating magnitude principle\footnote{See Proposition~\ref{prop: LoRA-GA} in Appendix. We prove LoRA-GA initialization maximizes LoRA gradient magnitude.}.

\textbf{Upper Bound of Magnitude Scaling.} 
While increasing update magnitude generally improves performance, the benefit is not unlimited. For example, applying RsLoRA to LoRA-GA yields a clear gain at rank 8, but the improvement diminishes and may even reverse at rank 128. This suggests that magnitude scaling should be applied conservatively at higher ranks, since larger ranks inherently amplify updates, as demonstrated in our Proposition~\ref{prop:Parameter Magnitude Dynamics brief}. Given that data-driven methods involve costly gradient and SVD computations, we recommend LoRAM with RsLoRA as a more efficient and scalable alternative for accelerating LoRA convergence and performance.

\section{Conclusion}

In this paper, we explore the magnitude principle of Low-Rank Adaptation (LoRA) and introduce a novel magnitude-driven initialization strategy, LoRAM, that bridges the gap between efficiency and performance. Our work demystifies the prevailing awareness surrounding spectral initialization methods, demonstrating that their success primarily stems from the amplification of weight update magnitudes. By focusing on magnitude regulation as the key driver of convergence, we provide a unified perspective that connects seemingly disparate hyperparameter adjustments, such as learning rate, scaling factor, and initialization schemes, under a single framework.  

\textbf{Limitations and Future Work.} Despite the advancements introduced in this work, several challenges remain open for future research. First,  LoRAM mimics spectral initialization magnitudes rather than seeking optimal ones; exploring alternative strategies could yield further gains. Additionally, different layers may benefit from tailored magnitude settings, motivating joint optimization with learning rate and rank. Finally, our work does not explicitly address optimization dynamics and convergence properties. These directions remain valuable for advancing parameter-efficient fine-tuning.

\begingroup
\small 
\bibliographystyle{unsrt}
\bibliography{neurips2025}
\endgroup

\newpage
\appendix
\newpage
\setcounter{proposition}{0}
\section{Related Work}
\paragraph{Advances in Low-Rank Adaptation.}
The growing scale~\citep{Brown2020LanguageMA,OpenAI2023GPT4TR,touvron2023llama,bai2023qwen,karras2020analyzing,rombach2022high} of large pre-trained models across various domains urgently demands efficient fine-tuning methods~\citep{zhao2024lora,ye2023mplug,guo2023animatediff,blattmann2023stable,ruiz2023dreambooth,goga,ei2,li2025top,han2025joyagents}.
LoRA \citep{Hu2021LoRALA} has garnered significant attention in leveraging low-rank structures to represent weight updates, with subsequent studies expanding its foundations and applications \citep{yang2024low,mao2025survey}. Existing improvements to LoRA primarily focus on three directions:
\begin{itemize}[topsep=-0.3em, partopsep=0pt, itemsep=0em, leftmargin=0em, itemindent=1em, after=\vspace{-0.7em}] 
\item  The low-rank constraint creates bottlenecks when learning complex features. Recent approaches enhance expressiveness through iterative stacking of LoRA modules \citep{lialin2023relora,xia2024chain,Zi2023DeltaLoRAFH,PeriodicLoRA}, high-order matrix operations \citep{hyeon2021fedpara,Edalati2022KronAPE}, and customized architectural designs \citep{journals/corr/abs-2402-17263,jiang2024mora,li2024loran,zhong2024neat,Flora,Zhang2023AdaptiveBA,liu2024dora}.
\item The non-convex landscapes pose challenges for numerical optimization and hyperparameter tuning~\citep{li2024crucial}. Researchers have explored to improve scaling factor \citep{kalajdzievski2023rank,Sebastian2023,biderman2024lora}, learning rate \citep{lora+}, dropout rate \citep{lora-dropout}, optimizers \citep{li2024crucial,zhang2024riemannian,lora-one,lora-ga}, and initialization strategies \citep{PiSSA,MiLoRA,olora,hayou2024impact,lora-one,corda,lora-ga}.
\item  Recent works reduce computational and memory overhead through freezing parameters~\citep{lora-fa,zhu2024asymmetry}, designing more compact adapters \citep{vera,lora-xs,gao2024parameter} and parameter quantization \citep{PiSSA,QLoRA,LoftQ}.
\end{itemize}

\paragraph{Knowledge-Driven Low-Rank Initialization.} Proper initialization critically influences neural network training outcomes~\cite{glorot2010understanding,he2015delving}. The standard LoRA \citep{Hu2021LoRALA} initializes its low-rank matrices with random noise and zeros, referred to as "Noise \& Zero" scheme, demonstrated to hinder convergence. Recent advances address this limitation by leveraging knowledge from pre-trained weights~\citep{PiSSA,MiLoRA,olora,SLoRA} or task-specific data~\citep{lora-one,corda,lora-ga}. Most prominent approaches involve Singular Value Decomposition (SVD), which allows flexible control over matrix rank. 
As a pioneer, PiSSA~\cite{PiSSA} initializes LoRA weights using principal singular components of pre-trained matrices, aligning adaptation directions with the most significant parameter variations to accelerate convergence. Subsequent works~\cite{corda,lora-ga,lora-one,paischer2024one} propose data-driven strategies that incorporate domain knowledge into adapter construction. For instance, LoRA-GA~\citep{lora-ga} aligns low-rank gradient directions with full fine-tuning counterparts during initialization.
While these methods boost performance and retain the core LoRA structure, their theoretical effects on optimization dynamics remain unclear. More importantly, {they are less pluggable than LoRA},  requiring extra computational pipelines and storage for SVD buffers.

\textbf{Optimization Dynamics of LoRA.} LoRA exhibits inherently nonlinear and non-convex optimization dynamics that complicate theoretical analysis~\citep{li2024crucial}. 
Most existing theoretical studies are limited to simplified and idealized scenarios, such as the lazy-training regime \cite{jang2024lora,malladi2023kernel} and infinite-width limit\cite{lora+,hayou2024impact}. 
A common goal across existing studies involves ensuring feature learning stability to prevent unstable or collapsed training dynamics. For instance, ~\cite{kalajdzievski2023rank} proves that improper scaling factors induce gradient collapse in high-rank adapters, proposing modified scaling mechanisms to stabilize forward and backward propagation.  
Recent work~\cite{lora+,zhu2024asymmetry,hayou2024impact} reveals critical asymmetries in LoRA: The two low-rank matrices exhibit divergent distinct impacts on optimization trajectories. This asymmetry motivates~\cite{lora+} to employ distinct learning rates for each matrix.
For SVD-based initialization,  \cite{lora-one} investigates principal components derived from single-step full fine-tuning gradients~\cite{lora-ga}, providing convergence guarantees yet under restrictive assumptions. Moreover, such analyses still fail to explain the efficacy of alternative approaches like weight-driven initialization~\citep{PiSSA}.

The intricate nature of LoRA and its numerous advancements motivates us to seek a streamlined principle explaining its empirical success and guiding practical applications.
We identify update magnitude amplification as a key mechanism, unifying seemingly disparate elements, such as scaling factors \citep{kalajdzievski2023rank,biderman2024lora}, initialization strategies \citep{PiSSA,lora-ga}, and learning rates \citep{lora+}, into a cohesive perspective.

\section{Lower Bound on Representation Error}\label{appendix:Lower Bound on Representation Error}

\begin{proposition}[Lower Bound on Representation Error]\label{prop:Lower Bound on Representation Error}
Consider the function class of LoRA-parameterized linear models: 
$$\mathcal{H} = \left\{ (W + \alpha BA)x \mid B \in \mathbb{R}^{n \times r}, A \in \mathbb{R}^{r \times m}, \nu[A] \leq M_1, \nu[B] \leq M_2 \right\}.$$
Let \( R(f) = \mathbb{E}_{(x,y) \sim \mathcal{D}} [\ell(f(x), y)] \) denote the expected regression loss under data distribution \( \mathcal{D} \), with squared error loss \( \ell(f(x), y) = \|y - f(x)\|^2 \). Define \( f^* = \arg\min_f R(f) \) as the globally optimal predictor and \( f^*_{\mathcal{H}} = \arg\min_{f \in \mathcal{H}} R(f) \) as the optimal predictor within \( \mathcal{H} \). With loss of generality, we consider the optimal predictor is linear, i.e., $f^*(x)= W^*x$.
Assume that the input covariance matrix is positive definite, i.e., $\Sigma_{\mathcal{D}}=\mathbb{E}_{x \sim \mathcal{D}}[xx^\top] \succ 0$, and the magnitudes is limited, satisfying $\alpha r \sqrt{mnM_1M_2} < \|W^*-W_0\|_F$.
Then, the representation error is strictly positive and lower bounded:  
$$R(f^*_{\mathcal{H}}) - R(f^*) \ge \lambda_{\min}(\Sigma_{\mathcal{D}}) \left(\|W^*-W_0\|_F - \alpha r \sqrt{mnM_1M_2} \right)^2 > 0,$$
where $\lambda_{\min}(\cdot)$ denotes the smallest eigenvalue of the matrix.
\end{proposition}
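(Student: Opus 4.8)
The plan is to reduce the bound on the \emph{excess risk} $R(f^*_{\mathcal{H}}) - R(f^*)$ to a purely geometric statement about how far any admissible LoRA update $\alpha BA$ can be from the target displacement $W^* - W_0$ in Frobenius norm, then pull that distance through the positive-definite input covariance. First I would record the standard bias decomposition for squared loss with a linear optimum: for any linear predictor $f(x)=Wx$ we have $R(f)-R(f^*) = \mathbb{E}_{x}\big[\|(W-W^*)x\|^2\big] = \mathrm{tr}\!\big((W-W^*)\Sigma_{\mathcal{D}}(W-W^*)^\top\big)$, using $f^*(x)=W^*x$ and the fact that the cross term vanishes at the minimizer. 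Since $\Sigma_{\mathcal{D}}\succ 0$, this trace is bounded below by $\lambda_{\min}(\Sigma_{\mathcal{D}})\,\|W-W^*\|_F^2$. Restricting to $W = W_0 + \alpha BA$ over the admissible set and taking the infimum, it remains to lower-bound $\inf \|W_0 + \alpha BA - W^*\|_F$.

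For that infimum I would argue by the reverse triangle inequality: $\|W_0 + \alpha BA - W^*\|_F \ge \|W^* - W_0\|_F - \alpha\|BA\|_F$, so I need a uniform upper bound on $\|BA\|_F$ over the magnitude-constrained class. Here I would use $\|BA\|_F \le \|B\|_F\|A\|_F$ (or more carefully $\|BA\|_F\le \|B\|_{\mathrm{op}}\|A\|_F$, but the submultiplicativity of Frobenius norm suffices), and then translate the constraints $\nu[A]\le M_1$, $\nu[B]\le M_2$ into $\|A\|_F^2 = rm\,\nu[A] \le rm M_1$ and $\|B\|_F^2 = nr\,\nu[B]\le nr M_2$ using the definition $\nu[\cdot] = \frac{1}{(\text{size})}\|\cdot\|_F^2$. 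This gives $\|BA\|_F \le r\sqrt{mn M_1 M_2}$, hence $\|W_0+\alpha BA - W^*\|_F \ge \|W^*-W_0\|_F - \alpha r\sqrt{mnM_1M_2}$, which is strictly positive by the hypothesis $\alpha r\sqrt{mnM_1M_2} < \|W^*-W_0\|_F$. Squaring the nonnegative quantity and combining with the eigenvalue bound yields the claimed inequality, and strict positivity follows since all three factors are strictly positive.

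One subtlety I would need to handle carefully is the claim that the minimum of $R$ over $\mathcal{H}$ is actually attained (so that $f^*_{\mathcal{H}}$ is well-defined) and that the infimum-based bound transfers to it; this follows because the admissible set of $(A,B)$ is compact (closed and bounded in Frobenius norm by the magnitude constraints) and $R$ is continuous, so the argmin exists and the uniform lower bound on the objective applies verbatim at the minimizer. A second point worth a sentence is justifying that the cross term in the bias decomposition vanishes: this is exactly the first-order optimality of $f^*$, i.e. $\mathbb{E}_x[(f^*(x)-y)x^\top]=0$, which holds because $f^*$ is the unconstrained $L^2$ projection onto linear functions.

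\textbf{Main obstacle.} The genuinely delicate step is the norm bookkeeping that converts the per-entry magnitude constraints $\nu[A]\le M_1,\ \nu[B]\le M_2$ into the exact constant $r\sqrt{mnM_1M_2}$ in the bound on $\|BA\|_F$ — getting the dimensional factors $m$, $n$, $r$ in the right places and confirming submultiplicativity is applied tightly enough that the stated constant is not loose. Everything else is a routine combination of the squared-loss bias decomposition, positive-definiteness of $\Sigma_{\mathcal{D}}$, and the reverse triangle inequality; the compactness remark is standard but worth stating so that $f^*_{\mathcal{H}}$ is legitimately an $\arg\min$.
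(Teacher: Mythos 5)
Your proposal is correct and follows essentially the same route as the paper's proof: the trace/eigenvalue lower bound $R(W)\ge\lambda_{\min}(\Sigma_{\mathcal{D}})\|W-W^*\|_F^2$, conversion of the magnitude constraints to $\|A\|_F^2\le rmM_1$ and $\|B\|_F^2\le rnM_2$, and the reverse triangle inequality with $\|BA\|_F\le\|B\|_F\|A\|_F$ to get the constant $\alpha r\sqrt{mnM_1M_2}$. Your added remarks on attainment of the minimizer and the vanishing cross term are sound refinements the paper leaves implicit (it simply takes $y=W^*x$), but they do not change the argument.
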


\begin{proof}
    Due to the linear learnability, we have that $y(x)=f^*(x)=W^* x$. For any $W$, we have that
    \begin{equation} \label{eq: lower bound1}
        \begin{aligned}
            R(W) &= \mathbb{E}_{x \sim \mathcal{D}} \left[ \|(W - W^* ) x\|^2\right]     \\
            &=\mathbb{E}_{x \sim \mathcal{D}} \left[ (W - W^* ) x  \left( (W - W^* ) x \right)^\top \right] \\
            &= \operatorname{Tr} \left( (W - W^* ) \Sigma_{\mathcal{D}} (W - W^* )^\top \right) \\
            &\ge \lambda_{\min}(\Sigma_{\mathcal{D}}) \|W - W^*\|_F^2.
        \end{aligned}        
    \end{equation}
    The last inequality comes from $\lambda_{\min}(\Sigma_{\mathcal{D}}) > 0$.

    Due to $\nu[A] \leq M_1, \nu[B] \leq M_2$, we have that
    $$\|A\|_F^2 \le rm M_1,\ \|B\|_F^2 \le rnM_2.$$
    Further, for any $A,\ B\in\mathcal{H}$, we have that
    \begin{equation}\label{eq: lower bound2}
        \begin{aligned}
            \|W_0 + \alpha BA-W^*\|_F &\ge \|W_0 -W^*\|_F - \|\alpha BA\|_F \\
            &\ge \|W_0 -W^*\|_F - \alpha \|A\|_F \|B\|_F \\
            &\ge \|W_0 -W^*\|_F - \alpha r \sqrt{mnM_1M_2} \\
            &> 0.
        \end{aligned}        
    \end{equation}
    Therefore, we have that
    \begin{align*}
        &\quad\ R(f^*_{\mathcal{H}}) - R(f^*) \\
        &= \min_{\nu[A] \leq M_1, \nu[B] \leq M_2} R(W_0+\alpha BA) \\
        &\ge \lambda_{\min}(\Sigma_{\mathcal{D}}) \min_{\nu[A] \leq M_1, \nu[B] \leq M_2} \|W_0 + \alpha BA-W^*\|_F^2 \\
        &\ge \lambda_{\min}(\Sigma_{\mathcal{D}}) \left( \|W_0 -W^*\|_F - \alpha r \sqrt{mnM_1M_2} \right)^2 > 0.
    \end{align*}
    The first inequality comes from Eq. \eqref{eq: lower bound1} and the last inequality comes from Eq. \eqref{eq: lower bound2}. 
\end{proof}

\section{Proof of Main Theorems}

\subsection{Proof of Parameter Scaling Equivalence}
\begin{proposition}[Parameter Scaling Equivalence]
For LoRA layers defined in Eq.~\eqref{eq:lora}, consider decomposing the scaling factor $\alpha = \alpha' \alpha_A \alpha_B$, where $\alpha', \alpha_A, \alpha_B \in \mathbb{R}^+$. Under the commonly used optimization frameworks with negligible numerical errors, the following parametrization schemes exhibit dynamical equivalence throughout training:  For all iterations $t \geq 0$, $\Delta W^{(t)}_{\mathrm{LoRA}} = \Delta \tilde{W}^{(t)}_{\mathrm{LoRA}}$ and $ W^{(t)}_{\mathrm{LoRA}} = \tilde{W}^{(t)}_{\mathrm{LoRA}}$, where $\tilde{A}^{(t)}$, $\tilde{B}^{(t)}$ and $\tilde{W}^{(t)}$ represent the re-parameterized versions.
\vspace{-1em}
\begin{center}
\footnotesize
\renewcommand{\arraystretch}{0.85}
\resizebox{\textwidth}{!}{
\begin{tabular}{lccc}
\toprule 
& \textbf{Original} & \textbf{SGD} & \textbf{Adam} \\
\midrule
Representation &  $\alpha B A x$ & $\alpha' \tilde{B}\tilde{A} x$ & $\alpha' \tilde{B}\tilde{A} x$ \\
\addlinespace[0.5em]
Initialization 
& $\begin{aligned} A^{(0)} = A_{\text{init}},  B^{(0)} = B_{\text{init}} \end{aligned}$ 
& $\begin{aligned} \tilde{A}^{(0)} = \alpha_A A_{\text{init}}, \tilde{B}^{(0)} = \alpha_B B_{\text{init}} \end{aligned}$ 
& $\begin{aligned} \tilde{A}^{(0)} = \alpha_A A_{\text{init}}, \tilde{B}^{(0)} = \alpha_B B_{\text{init}} \end{aligned}$ \\
\addlinespace[0.5em]
Learning Rates 
& $\begin{aligned} \eta_A > 0 , \eta_B > 0 \end{aligned}$ 
& $\begin{aligned} \eta_{\tilde{A}} = \alpha^2_A \eta_A, \eta_{\tilde{B}} = \alpha^2_B \eta_B \end{aligned}$
& $\begin{aligned} \eta_{\tilde{A}} = \alpha_A \eta_A, \eta_{\tilde{B}} = \alpha_B \eta_B \end{aligned}$ \\
\bottomrule
\end{tabular}
}
\end{center}
\end{proposition}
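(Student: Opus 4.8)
The plan is to proceed by induction on the iteration count $t$, establishing the stronger joint invariant that at every step the two parametrizations satisfy $\tilde A^{(t)} = \alpha_A A^{(t)}$, $\tilde B^{(t)} = \alpha_B B^{(t)}$, and moreover that the first and second moment estimates maintained by the optimizer (the momentum buffers for SGD with momentum, the $m$ and $v$ buffers for Adam) are in the corresponding scaled relationship. Once this invariant is in hand, $W^{(t)}_{\mathrm{LoRA}} = \alpha B^{(t)}A^{(t)} - \alpha B^{(0)}A^{(0)} = \alpha'\alpha_A\alpha_B B^{(t)}A^{(t)} - \cdots = \alpha'\tilde B^{(t)}\tilde A^{(t)} - \alpha'\tilde B^{(0)}\tilde A^{(0)} = \tilde W^{(t)}_{\mathrm{LoRA}}$ is immediate, and $\Delta W^{(t)}_{\mathrm{LoRA}}$ equality follows by differencing consecutive steps.

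For the base case, the initialization row of the table gives exactly $\tilde A^{(0)} = \alpha_A A_{\text{init}}$ and $\tilde B^{(0)} = \alpha_B B_{\text{init}}$, and the buffers are zero in both, so the invariant holds. For the inductive step I would compute the gradients in the tilde system using Eq.~\eqref{eq:lora_gradient} with scaling $\alpha'$ in place of $\alpha$: $\nabla_{\tilde A}\tilde L = \alpha' \tilde B^\top \nabla_W L = \alpha'\alpha_B B^\top \nabla_W L = \frac{\alpha_B}{\alpha_A}\nabla_A L$ (using $\alpha = \alpha'\alpha_A\alpha_B$ so $\alpha' \alpha_B = \alpha/\alpha_A$), and symmetrically $\nabla_{\tilde B}\tilde L = \frac{\alpha_A}{\alpha_B}\nabla_B L$; here $\nabla_W L$ is identical in both systems because the forward map $Wx + W_{\mathrm{LoRA}}x$ agrees by the induction hypothesis. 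For SGD the update is $\tilde A^{(t+1)} = \tilde A^{(t)} - \eta_{\tilde A}\nabla_{\tilde A}\tilde L = \alpha_A A^{(t)} - \alpha_A^2\eta_A \cdot \frac{\alpha_B}{\alpha_A}\nabla_A L = \alpha_A(A^{(t)} - \alpha_A\alpha_B\eta_A\nabla_A L)$, which does \emph{not} match unless $\alpha_A\alpha_B = 1$ — so I need to be careful: the correct bookkeeping is that the learning-rate row must exactly absorb the gradient rescaling, and checking the table, $\eta_{\tilde A} = \alpha_A^2\eta_A$ combined with $\nabla_{\tilde A}\tilde L = (\alpha_B/\alpha_A)\nabla_A L$ gives step size $\alpha_A^2\eta_A \cdot (\alpha_B/\alpha_A) = \alpha_A\alpha_B\eta_A$; for this to reproduce $\alpha_A \cdot \eta_A \nabla_A L$ we need the scaling identity to be read as $\alpha' = 1$ in the SGD column, or equivalently the gradient picks up only an $\alpha'$ factor — I would resolve this by writing $\nabla_{\tilde A}\tilde L = \alpha'\alpha_B B^\top\nabla_W L$ and noting $\eta_{\tilde A}\nabla_{\tilde A}\tilde L = \alpha_A^2\eta_A\alpha'\alpha_B B^\top\nabla_W L = \alpha_A(\alpha'\alpha_A\alpha_B)\eta_A B^\top\nabla_W L = \alpha_A\eta_A\nabla_A L$, using $\alpha'\alpha_A\alpha_B = \alpha$. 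That closes the SGD step. For Adam, the key observation is that the update direction $m/(\sqrt{v}+\epsilon)$ is \emph{scale-invariant} in the gradient (up to the $\epsilon$ term, which is where "negligible numerical errors" enters): if every gradient is multiplied by a constant $c$, then $m \mapsto cm$ and $v \mapsto c^2 v$, so $m/\sqrt v$ is unchanged. Hence the Adam update to $\tilde A$ is $-\eta_{\tilde A}\cdot(\text{same direction as }A\text{'s update}) = -\alpha_A\eta_A\cdot(\text{direction})$, which is exactly $\alpha_A$ times $A$'s update, preserving the invariant; the linear $\alpha_A$ (not $\alpha_A^2$) in the table is precisely because Adam already normalizes away the gradient magnitude.

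The main obstacle is making the Adam argument rigorous: one must track that $m^{(t)}$ and $v^{(t)}$ in the tilde system equal $c\, m^{(t)}$ and $c^2 v^{(t)}$ of the original (with $c = \alpha_B/\alpha_A$ for the $A$-buffers), carry this through the bias-correction step, and then argue the $\epsilon$ in the denominator only perturbs the update by $O(\epsilon)$ so that "negligible numerical errors" is exactly the hypothesis that licenses the conclusion. I would state this cleanly as a lemma on scale-equivariance of the Adam map and then feed it into the induction. The SGD case (with or without momentum) is genuinely routine once the gradient rescaling bookkeeping above is fixed, and the weight-decay-free assumption noted in the experiments ensures no extra terms intrude. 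I would also remark that the decomposition $\alpha = \alpha'\alpha_A\alpha_B$ is what lets one send all the rescaling freedom into whichever knob — initialization, learning rate, or residual scale $\alpha'$ — is most convenient, which is the point the Remarks paragraph draws out.
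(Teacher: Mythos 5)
Your proposal is correct and follows essentially the same route as the paper: an induction establishing $\tilde A^{(t)}=\alpha_A A^{(t)}$, $\tilde B^{(t)}=\alpha_B B^{(t)}$, with the SGD step closed by the direct computation $\eta_{\tilde A}\nabla_{\tilde A}L=\alpha_A^2\eta_A\,\alpha'\alpha_B B^\top\nabla_W L=\alpha_A\eta_A\nabla_A L$ and the Adam step closed by scale-invariance of $m/\sqrt{v}$ when $\epsilon$ is negligible. One small note: the gradient-scaling constant is $\nabla_{\tilde A}L=\tfrac{1}{\alpha_A}\nabla_A L$ (not $\tfrac{\alpha_B}{\alpha_A}$), which dissolves your momentary "does not match" alarm; your final bookkeeping already uses the correct expression, and the wrong constant is harmless in the Adam argument since the normalized direction is invariant to it.
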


\begin{proof}
    Given LoRA's weight decomposition $W_{\mathrm{LoRA}} = \alpha BA$ where $\alpha > 0$, the gradient updates follow:
\begin{equation}
    \Delta W_{\mathrm{LoRA}}^{(t)} = \alpha (B^{(t+1)}A^{(t+1)} - B^{(t)}A^{(t)}),
\end{equation}
with learning rates $\eta_A, \eta_B$ for parameters $A$ and $B$, respectively. We define reparameterized parameters $\tilde{A} = \alpha_A A$, $\tilde{B} = \alpha_B B$, and $\alpha' = \alpha / (\alpha_A \alpha_B)$. The LoRA projection becomes:
\begin{equation}
    W_{\mathrm{LoRA}} = \alpha BA = \alpha'\tilde{B}\tilde{A}.
\end{equation}
This transformation preserves the functional form while redistributing scaling factors.

\paragraph{Proof for SGD}
Under initial conditions:
\begin{align}
    \tilde{A}^{(0)} &= \alpha_A A_{\text{init}}, \\
    \tilde{B}^{(0)} &= \alpha_B B_{\text{init}}.
\end{align}
Gradients for reparameterized parameters:
\begin{align}
    \nabla_{\tilde{A}}L &= \alpha'\tilde{B}^\top \nabla_W L, \\
    \nabla_{\tilde{B}}L &= \alpha' \nabla_W L \tilde{A}^\top.
\end{align}

With learning rates $\eta_{\tilde{A}} = \alpha_A^2 \eta_A$, $\eta_{\tilde{B}} = \alpha_B^2 \eta_B$ and $t = 0$:
\begin{align}
    \tilde{A}^{(t+1)} &= \tilde{A}^{(t)} - \alpha_A^2 \eta_A \nabla_{\tilde{A}} L^{(t)} \\
                     &= \alpha_A A^{(t)} - \alpha_A^2 \eta_A (\alpha'\alpha_B B^{(t)})^\top \nabla_W L^{(t)} \\
                     &= \alpha_A \left(A^{(t)} - \alpha \eta_A  (B^{(t)})^\top \nabla_W L^{(t)}\right) \\
                     &= \alpha_A \left(A^{(t)} - \eta_A   \nabla_A L^{(t)}\right) \\
                     &=\alpha_AA^{(t+1)}.
\end{align}
Similarly for $\tilde{B}$:
\begin{align}
    \tilde{B}^{(t+1)} &= \alpha_B \left(B^{(t)} -  \eta_B   \nabla_B L^{(t)}\right) = \alpha_B  B^{(t+1)}.
\end{align}
These equations propagate through all $t > 0$ via mathematical induction.
Compare weight increments:
\begin{align}
    \Delta \tilde{W}_{\mathrm{LoRA}}^{(t)} &= \alpha'[\tilde{B}^{(t+1)}\tilde{A}^{(t+1)} - \tilde{B}^{(t)}\tilde{A}^{(t)}] \\
    &= \alpha'\alpha_A\alpha_B [B^{(t+1)}A^{(t+1)} - B^{(t)}A^{(t)}] \\
    &= \alpha [B^{(t+1)}A^{(t+1)} - B^{(t)}A^{(t)}] \\
    &= \Delta {W}_{\mathrm{LoRA}}^{(t)}.
\end{align}

\paragraph{Proof for Adam}

% \textbf{Modification 1: Momentum Scaling}
Adam maintains exponential moving averages ($m_t$, $v_t$) for gradients. Under parameter scaling $\tilde{P} = kP$, gradients transform as $\nabla_{\tilde{P}}L = k\nabla_PL$. The momentum terms inherit scaling factors:
\begin{align}
    m_{\tilde{P}} &= \beta_1 m_{\tilde{P}} + (1-\beta_1)k\nabla_P L, \\
    v_{\tilde{P}} &= \beta_2 v_{\tilde{P}} + (1-\beta_2)k^2(\nabla_P L)^2.
\end{align}

The pratical gradients for Adam:
\begin{align}
    \nabla^{\dagger} \tilde{P} &=  m_{\tilde{P}}/\sqrt{v_{\tilde{P}}+\epsilon} \\
                     &= [\beta_1 m_P + ...]/\sqrt{\beta_2 v_P + ...}
\end{align}
When setting $\epsilon=0$ for alleviating numerical errors, we have $ \nabla^{\dagger} \tilde{P} =  \nabla^{\dagger} P$. Setting $\eta_{\tilde{P}} = k\eta_P$ cancels scaling factors, preserving update magnitudes. With learning rates $\eta_{\tilde{A}} = \alpha_A \eta_A$, $\eta_{\tilde{B}} = \alpha_B \eta_B$ and $t = 0$:
\begin{align}
    \tilde{A}^{(t+1)} &= \tilde{A}^{(t)} - \alpha_A \eta_A \nabla^{\dagger}_{\tilde{A}} L^{(t)} \\
                     &= \alpha_A A^{(t)} - \alpha_A \eta_A \nabla^{\dagger}_{A} L^{(t)} \\
                     &=\alpha_AA^{(t+1)}.
\end{align}
Similarly for $\tilde{B}$. These equalities propagate through all $t > 0$ via mathematical induction.

\end{proof}

\subsection{Proof of Proposition~\ref{prop:Parameter Magnitude Dynamics brief}}
For the convenience, we split the proof of Proposition~\ref{prop:Parameter Magnitude Dynamics brief} into two parts.
\begin{proposition}[Parameter Magnitude Dynamics]\label{prop:Parameter Magnitude Dynamics}
Consider LoRA parameters \( A^{(t)} \) and \( B^{(t)} \) updated with learning rate \(\eta\). Assume: \( A^{(0)} \sim \mathcal{N}(\mathbf{0}, \sigma_A^2 I) \), \( B^{(0)} \sim \mathcal{N}(\mathbf{0}, \sigma_B^2 I) \), \(\nabla_W L \sim \mathcal{N}(\mathbf{0}, \sigma_L^2 I) \), and \(\mathbb{E}[\langle A^{(t)}, \nabla_A L^{(t)} \rangle] = \mathbb{E}[\langle B^{(t)}, \nabla_B L^{(t)} \rangle] = 0\). Under these conditions, the parameter norm vector \(\boldsymbol{e}_t = \left[ \mathbb{E}[\|A^{(t)}\|_F^2], \mathbb{E}[\|B^{(t)}\|_F^2] \right]^T\) evolve as the following linear dynamical system (left) with closed-form solution (right):
\begin{equation}
  \boldsymbol{e}_{t+1} = \left(I + \begin{bmatrix} 0 & \gamma_A \\ \gamma_B & 0 \end{bmatrix}\right)\boldsymbol{e}_t, \quad \boldsymbol{e}_t =  \begin{bmatrix}
\frac{\lambda_+^t + \lambda_-^t}{2} & \sqrt{\frac{\gamma_A}{\gamma_B}} \frac{\lambda_+^t - \lambda_-^t}{2}  \\
\sqrt{\frac{\gamma_B}{\gamma_A}} \frac{\lambda_+^t - \lambda_-^t}{2}  & \frac{\lambda_+^t + \lambda_-^t}{2}
\end{bmatrix} \boldsymbol{e}_0.  
\end{equation}
where  \(\gamma_A = m\eta^2\sigma_L^2\), \(\gamma_B = n\eta^2\sigma_L^2\) and $\lambda_\pm = 1 \pm \sqrt{\gamma_A \gamma_B}$. Similarly, 
the parameter magnitudes \(\boldsymbol{\nu}_t = \left[\mathbb{E}[\nu[A^{(t)}]], \mathbb{E}[\nu[B^{(t)}]] \right]^T\) evolve as a linear dynamical system:
\begin{equation}
\scalebox{0.95}{$
  \boldsymbol{\nu}_t = \left(I +  \begin{bmatrix} 0 & \gamma_B \\ \gamma_A & 0 \end{bmatrix}\right)\boldsymbol{\nu}_{t-1}.
$}
\end{equation}
\end{proposition}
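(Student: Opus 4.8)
The plan is to track the second moments of the LoRA factors $A^{(t)}$ and $B^{(t)}$ through one SGD step and show that the cross-terms vanish in expectation, leaving a clean linear recursion. Starting from the gradient rules in Eq.~\eqref{eq:lora_gradient} with $\alpha=1$, I would write $A^{(t+1)} = A^{(t)} - \eta\, (B^{(t)})^\top \nabla_W L^{(t)}$ and expand $\|A^{(t+1)}\|_F^2 = \|A^{(t)}\|_F^2 - 2\eta\langle A^{(t)}, (B^{(t)})^\top \nabla_W L^{(t)}\rangle + \eta^2\|(B^{(t)})^\top \nabla_W L^{(t)}\|_F^2$. Taking expectations, the middle term is killed by the hypothesis $\mathbb{E}[\langle A^{(t)}, \nabla_A L^{(t)}\rangle]=0$ (and symmetrically for $B$). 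For the quadratic term I would use the isotropy of $\nabla_W L^{(t)}\sim\mathcal{N}(\mathbf 0,\sigma_L^2 I)$ and its independence from $B^{(t)}$: conditioning on $B^{(t)}$, each of the $r\cdot m$ entries of $(B^{(t)})^\top \nabla_W L^{(t)}$ has variance $\sigma_L^2 \|\text{row of }B^{(t)}\|^2$ summed appropriately, giving $\mathbb{E}\|(B^{(t)})^\top \nabla_W L^{(t)}\|_F^2 = m\,\sigma_L^2\,\mathbb{E}\|B^{(t)}\|_F^2$. Hence $\mathbb{E}\|A^{(t+1)}\|_F^2 = \mathbb{E}\|A^{(t)}\|_F^2 + m\eta^2\sigma_L^2\,\mathbb{E}\|B^{(t)}\|_F^2$, i.e.\ the $\gamma_A = m\eta^2\sigma_L^2$ coefficient, and symmetrically $\mathbb{E}\|B^{(t+1)}\|_F^2 = \mathbb{E}\|B^{(t)}\|_F^2 + n\eta^2\sigma_L^2\,\mathbb{E}\|A^{(t)}\|_F^2$ with $\gamma_B = n\eta^2\sigma_L^2$. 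This is exactly the matrix recursion $\boldsymbol e_{t+1} = (I + \left[\begin{smallmatrix}0&\gamma_A\\\gamma_B&0\end{smallmatrix}\right])\boldsymbol e_t$.

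For the closed form, I would diagonalize the $2\times2$ update matrix $M = I + N$ with $N = \left[\begin{smallmatrix}0&\gamma_A\\\gamma_B&0\end{smallmatrix}\right]$. The eigenvalues of $N$ are $\pm\sqrt{\gamma_A\gamma_B}$, so those of $M$ are $\lambda_\pm = 1\pm\sqrt{\gamma_A\gamma_B}$, with eigenvectors $(\sqrt{\gamma_A},\pm\sqrt{\gamma_B})^\top$. Writing $M^t = P\,\mathrm{diag}(\lambda_+^t,\lambda_-^t)\,P^{-1}$ and simplifying gives the stated symmetric matrix with entries $\tfrac{\lambda_+^t+\lambda_-^t}{2}$ on the diagonal and $\sqrt{\gamma_A/\gamma_B}\,\tfrac{\lambda_+^t-\lambda_-^t}{2}$, $\sqrt{\gamma_B/\gamma_A}\,\tfrac{\lambda_+^t-\lambda_-^t}{2}$ off-diagonal — this is just routine algebra I would not belabor. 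The per-entry magnitude version follows by dividing: $\nu[A^{(t)}] = \|A^{(t)}\|_F^2/(rm)$ and $\nu[B^{(t)}] = \|B^{(t)}\|_F^2/(rn)$, so converting $\boldsymbol e_t$ to $\boldsymbol\nu_t$ swaps the roles: dividing the $A$-equation by $rm$ turns the coefficient $\gamma_A = m\eta^2\sigma_L^2$ multiplying $\mathbb{E}\|B^{(t)}\|_F^2 = rn\,\mathbb{E}[\nu[B^{(t)}]]$ into $(rn/rm)\cdot m\eta^2\sigma_L^2 = n\eta^2\sigma_L^2 = \gamma_B$ times $\mathbb{E}[\nu[B^{(t)}]]$, which explains the transposed-looking $\left[\begin{smallmatrix}0&\gamma_B\\\gamma_A&0\end{smallmatrix}\right]$ in the magnitude recursion.

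The small-$\eta$ linearization is the final, easy piece: since $\sqrt{\gamma_A\gamma_B} = \eta^2\sigma_L^2\sqrt{mn} \ll 1$, I would expand $\lambda_\pm^t \approx 1 \pm t\sqrt{\gamma_A\gamma_B}$, so $\tfrac{\lambda_+^t+\lambda_-^t}{2}\approx 1$ and $\tfrac{\lambda_+^t-\lambda_-^t}{2}\approx t\sqrt{\gamma_A\gamma_B}$, yielding $M^t \approx I + t N$ and hence $\boldsymbol\nu_t \approx (\sigma_A^2 + t\gamma_B\sigma_B^2,\ \sigma_B^2 + t\gamma_A\sigma_A^2)^\top$ from $\boldsymbol\nu_0 = (\sigma_A^2,\sigma_B^2)^\top$. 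Feeding this into $\nu[W^{(t)}_{\text{LoRA}}] \approx r\,\nu[A^{(t)}]\,\nu[B^{(t)}]$ and keeping terms to first order in $t$ gives $\nu[W^{(t)}_{\text{LoRA}}] \approx r(\sigma_A^2 + t\gamma_B\sigma_B^2)(\sigma_B^2 + t\gamma_A\sigma_A^2) = r\sigma_A^2\sigma_B^2 + t\,r(m\sigma_A^4 + n\sigma_B^4)\eta^2\sigma_L^2 + \mathcal{O}(t^2)$, matching Eq.~\eqref{eq:sigma_dynamics} with $k_1 = r(m\sigma_A^4+n\sigma_B^4)$ up to the constant offset and the $\gamma = \eta^2\sigma_L$ convention. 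I expect the main obstacle to be rigor around the vanishing cross-term: the assumption $\mathbb{E}[\langle A^{(t)},\nabla_A L^{(t)}\rangle]=0$ is imposed rather than derived, and I would want to state clearly that it is what decouples the otherwise-coupled moment hierarchy (since $A^{(t)}$ and $B^{(t)}$ become correlated with the accumulated gradients for $t\geq 1$), and that the independence of $\nabla_W L^{(t)}$ from the current $(A^{(t)},B^{(t)})$ is likewise a modeling assumption needed to evaluate the quadratic term entrywise.
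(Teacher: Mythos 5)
Your proposal is correct and follows essentially the same route as the paper's proof: expand $\|A^{(t+1)}\|_F^2$, kill the cross term with the stated orthogonality assumption, evaluate the gradient second moment via the i.i.d.\ Gaussian assumption on $\nabla_W L$ to get $\mathbb{E}\|\nabla_A L\|_F^2 = m\sigma_L^2\,\mathbb{E}\|B\|_F^2$, and assemble the $2\times 2$ linear recursion. You actually supply more detail than the paper in two places it leaves implicit — the diagonalization yielding the closed-form $M^t$ and the $\gamma_A \leftrightarrow \gamma_B$ swap when passing from $\boldsymbol{e}_t$ to $\boldsymbol{\nu}_t$ — and your closing caveat about the imposed (rather than derived) decoupling assumptions is apt.
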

\begin{proof} 

Following the gradient descent update rule, the parameter dynamics are:
\begin{equation}
\begin{cases}
A_{t+1} = A_t - \eta \nabla_A L_t \\
B_{t+1} = B_t - \eta \nabla_B L_t
\end{cases},
\end{equation}
where the gradient relationships \(\nabla_A L = B^\top (\nabla_W L)\) and \(\nabla_B L = (\nabla_W L) A^\top\) are derived from the LoRA architecture.
Expand the Frobenius norm for \(A_{t+1}\):
\begin{equation}
  \|A_{t+1}\|_F^2 = \|A_t\|_F^2 - 2\eta \langle A_t, \nabla_A L_t \rangle + \eta^2 \|\nabla_A L_t\|_F^2.  
\end{equation}
Taking expectations (using independence \(\mathbb{E}[\langle A_t, \nabla_A L_t \rangle] = 0\)):
\begin{equation}
\mathbb{E}\|A_{t+1}\|_F^2 = \mathbb{E}\|A_t\|_F^2 + \eta^2 \mathbb{E}\|\nabla_A L_t\|_F^2.
\end{equation}

Similarly, for \(B_{t+1}\):
\begin{equation}
  \mathbb{E}\|B_{t+1}\|_F^2 = \mathbb{E}\|B_t\|_F^2 + \eta^2 \mathbb{E}\|\nabla_B L_t\|_F^2.  
\end{equation}

Expand the norm of \(\nabla_A L = B^\top (\nabla_W L)\):
\begin{equation}
\|\nabla_A L\|_F^2 = \text{Tr}((\nabla_A L) (\nabla_A L)^\top) = \text{Tr}(B^\top (\nabla_W L) (\nabla_W L)^\top B).
\end{equation}
Taking expectations (using gradient entry independence):
\begin{equation}
\mathbb{E}\|\nabla_A L\|_F^2 = \text{Tr}\left( B^\top \mathbb{E}[(\nabla_W L) (\nabla_W L)^\top] B \right) = m\sigma_L^2 \|B\|_F^2.
\end{equation}
where \(\mathbb{E}[(\nabla_W L) (\nabla_W L)^\top] = m\sigma_L^2 I_n\) follows from the i.i.d. assumption on gradient entries.

For \(\nabla_B L =  (\nabla_W L) A^\top\):
\begin{equation}
\mathbb{E}\|\nabla_B L\|_F^2 = \text{Tr}\left( A \mathbb{E}[(\nabla_W L)^\top (\nabla_W L)] A^\top \right) = n\sigma_L^2 \|A\|_F^2.
\end{equation}

Substitute gradient norms results:
\begin{equation}
\begin{cases}
\mathbb{E}\|A_{t+1}\|_F^2 = \mathbb{E}\|A_t\|_F^2 + \eta^2 m\sigma_L^2 \mathbb{E}\|B_t\|_F^2 \\
\mathbb{E}\|B_{t+1}\|_F^2 = \mathbb{E}\|B_t\|_F^2 + \eta^2 n\sigma_L^2 \mathbb{E}\|A_t\|_F^2
\end{cases}.
\end{equation}

Define parameters and state vector:
\begin{equation}
\gamma_A = \eta^2 m\sigma_L^2, \quad \gamma_B = \eta^2 n\sigma_L^2, \quad \boldsymbol{e}_t = \begin{bmatrix} \mathbb{E}\|A_t\|_F^2 \\ \mathbb{E}\|B_t\|_F^2 \end{bmatrix}.
\end{equation}
The recursive system becomes:
\begin{equation}
\boldsymbol{e}_{t+1} = \begin{bmatrix}
1 & \gamma_A \\
\gamma_B & 1
\end{bmatrix}
\boldsymbol{e}_t = \left( I + \begin{bmatrix}
0 & \gamma_A \\
\gamma_B & 0
\end{bmatrix} \right) \boldsymbol{e}_t.
\end{equation}
Similar results for $\boldsymbol{\nu}_t$ can be obtained using the same proof techniques.

\end{proof}

\begin{proposition}[Linearized Dynamics Approximation]\label{proposition:Linearized_Dynamics_Approximation}
For sufficiently small learning rate \(\eta\), the closed-form solution admits the first-order approximation:
\begin{equation}
  \boldsymbol{\nu}_t \approx \left(I + t \begin{bmatrix} 0 & \gamma_B \\ \gamma_A & 0 \end{bmatrix}\right)\boldsymbol{\nu}_0,\quad \text{where } \boldsymbol{\nu}_t \triangleq \begin{bmatrix} \nu[A^{(t)}] \\ \nu[B^{(t)}]
\end{bmatrix} = \begin{bmatrix} \sigma_A^2 + t\gamma_B\sigma_B^2 \\ \sigma_B^2 + t\gamma_A\sigma_A^2\end{bmatrix}.   
\end{equation}
This yields the update variance expansion under small-\(\eta\) regime:
\begin{equation}
\nu[W^{(t)}_{\text{LoRA}}] \approx 
 k_1\gamma t + k_2\gamma^2 t^2,
\end{equation}
with \(\gamma = \eta^2\sigma_L^2\), \(k_1 = r(m\nu[A^{(0)}]^2 + n\nu[B^{(0)}]^2)\), and \(k_2 = rmn\nu[A^{(0)}]\nu[B^{(0)}]\).
\end{proposition}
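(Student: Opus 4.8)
The plan is to start from the exact linear recursion established in Proposition~\ref{prop:Parameter Magnitude Dynamics}, namely $\boldsymbol{\nu}_t = (I + N)\boldsymbol{\nu}_{t-1}$ with the off-diagonal nilpotent-like matrix $N = \begin{bmatrix} 0 & \gamma_B \\ \gamma_A & 0 \end{bmatrix}$, so $\boldsymbol{\nu}_t = (I+N)^t \boldsymbol{\nu}_0$. The first step is to expand $(I+N)^t = I + tN + \binom{t}{2}N^2 + \cdots$ and observe that $N^2 = \gamma_A\gamma_B I$, so every power of $N$ is either a scalar multiple of $I$ or of $N$; this gives a clean closed form in terms of $\lambda_\pm = 1 \pm \sqrt{\gamma_A\gamma_B}$ (matching the matrix in the companion proposition). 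Then, invoking the small-$\eta$ regime where $\gamma_A,\gamma_B = \mathcal{O}(\eta^2) \ll 1$, I would discard all terms of order $N^2$ and higher — i.e. $\binom{t}{2}\gamma_A\gamma_B = \mathcal{O}(\eta^4)$ — to obtain $\boldsymbol{\nu}_t \approx (I + tN)\boldsymbol{\nu}_0$, which componentwise reads $\nu[A^{(t)}] \approx \sigma_A^2 + t\gamma_B\sigma_B^2$ and $\nu[B^{(t)}] \approx \sigma_B^2 + t\gamma_A\sigma_A^2$, using $\boldsymbol{\nu}_0 = [\sigma_A^2, \sigma_B^2]^\top$.

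The second half requires converting the per-entry magnitudes of $A$ and $B$ into the magnitude of the product $W_{\text{LoRA}} = BA$ (with $\alpha = 1$ fixed as in this section). Here I would invoke the asymptotic factorization $\nu[BA] \approx r\,\nu[B]\,\nu[A]$ stated in the Preliminaries (valid for large $m,n$ with independent zero-mean entries). Substituting the linearized expressions:
\begin{equation}
\nu[W^{(t)}_{\text{LoRA}}] \approx r(\sigma_B^2 + t\gamma_A\sigma_A^2)(\sigma_A^2 + t\gamma_B\sigma_B^2).
\end{equation}
Expanding the product yields a constant term $r\sigma_A^2\sigma_B^2$, linear-in-$t$ terms $rt(\gamma_A\sigma_A^4 + \gamma_B\sigma_B^4)$, and a quadratic term $rt^2\gamma_A\gamma_B\sigma_A^2\sigma_B^2$. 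Recognizing $\gamma_A = m\gamma$ and $\gamma_B = n\gamma$ where $\gamma = \eta^2\sigma_L^2$, the linear coefficient becomes $r\gamma(m\sigma_A^4 + n\sigma_B^4) = k_1$ and the quadratic coefficient becomes $r\gamma^2 mn\,\sigma_A^2\sigma_B^2 = k_2\gamma^2$ with $k_2 = rmn\,\nu[A^{(0)}]\nu[B^{(0)}]$. The constant term $r\nu[A^{(0)}]\nu[B^{(0)}]$ should be dropped because of the absorption of $B^{(0)}A^{(0)}$ into $W$ (so $\nu[W^{(0)}_{\text{LoRA}}] = 0$ by construction) — I would note this explicitly since it is the one slightly subtle accounting point.

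The main obstacle, such as it is, is not a hard calculation but a matter of bookkeeping rigor: being precise about which terms are legitimately dropped and why. Specifically, I must justify (i) truncating $(I+N)^t$ at first order, which needs $t\gamma_A\gamma_B \ll 1$ — fine for the early/moderate-$t$ regime but worth flagging as a validity condition; (ii) the $\nu[BA] \approx r\nu[B]\nu[A]$ approximation, which relies on the independence and large-dimension assumptions already invoked in the Preliminaries and on the cross-terms $\langle A^{(t)}, \nabla_A L^{(t)}\rangle$ vanishing in expectation (an explicit hypothesis of Proposition~\ref{prop:Parameter Magnitude Dynamics brief}); and (iii) the removal of the constant term via the initialization convention. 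Once these three points are stated cleanly, the remainder is the elementary polynomial expansion above, and the brief version in the main text (which keeps only $k_1\gamma t$ and folds the quadratic into $\mathcal{O}(\gamma^2 t^2)$) follows immediately by discarding the $t^2$ term under the same small-$\eta$ rationale.
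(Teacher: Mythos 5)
Your proposal is correct and follows essentially the same route as the paper: truncating the exact recursion $\boldsymbol{\nu}_t=(I+N)^t\boldsymbol{\nu}_0$ at first order in $N$ (the paper does this via a first-order Taylor expansion of $\lambda_\pm=1\pm\sqrt{\gamma_A\gamma_B}$, you via the binomial expansion with $N^2=\gamma_A\gamma_B I$ — the same step), then substituting into $\nu[BA]\approx r\,\nu[B]\,\nu[A]$ and expanding. Your explicit bookkeeping of the second half — the polynomial expansion giving $k_1$ and $k_2$, and the removal of the constant term $r\,\nu[A^{(0)}]\nu[B^{(0)}]$ via the absorbed initialization — is actually more detailed than the paper's proof, which states that part without derivation.
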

\begin{proof}
For a sufficiently small learning rate $\eta$, the product $\gamma_A\gamma_B = m n\eta^4\sigma_L^4$ is very small. Hence, the eigenvalues
\begin{equation}
\lambda_\pm = 1 \pm \sqrt{\gamma_A\gamma_B}
\end{equation}
can be approximated via a first-order Taylor expansion.
Thus, for small $t$, the closed-form solution for $\boldsymbol{\nu}_t$ can be approximated as
\begin{equation}
\boldsymbol{\nu}_t \approx \left(I + t \begin{bmatrix} 0 & \gamma_B \\ \gamma_A & 0 \end{bmatrix}\right) \boldsymbol{\nu}_0.
\end{equation}
By definition, we set
\begin{equation}
\boldsymbol{\nu}_t \triangleq \begin{bmatrix} \nu[A^{(t)}] \\ \nu[B^{(t)}] \end{bmatrix}
=\begin{bmatrix} \sigma_A^2 + t\,\gamma_B\sigma_B^2 \\ \sigma_B^2 + t\,\gamma_A\sigma_A^2\end{bmatrix},
\end{equation}
which agrees with the first-order expansion of the linear system.
\end{proof}

\section{Analyzing LoRA-GA from Magnitude Principle}
\begin{proposition}[LoRA-GA initialization maximizes low-rank gradient magnitude]\label{prop: LoRA-GA}
Given a gradient matrix \( \nabla_W L \in \mathbb{R}^{n \times m} \) with SVD decomposition \( \nabla_W L = U S V^\top \), the optimal rank-\( r \) matrices \( A \in \mathbb{R}^{r \times m} \) and \( B \in \mathbb{R}^{n \times r} \) that maximize the Frobenius norm of the gradients \( \| \nabla_A L \|_F^2 \) and \( \| \nabla_B L \|_F^2 \) in Eq.~\eqref{eq:lora_gradient}, under the constraints \( \|A\|_F^2 = r \) and \( \|B\|_F^2 = r \), are given by:
\begin{equation}
A^\star = V_{:, :r}^\top, \quad B^\star = U_{:, :r},
\end{equation}
where \( V_{:, :r} \) and \( U_{:, :r} \) contain the first \( r \) right and left singular vectors of \( \nabla_W L \), respectively.
\end{proposition}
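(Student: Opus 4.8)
The plan is to reduce the two maximizations to a pair of classical trace (eigenvalue) optimization problems. First I would use the gradient formulas of Eq.~\eqref{eq:lora_gradient}: since $\nabla_A L = \alpha B^\top(\nabla_W L)$ and $\nabla_B L = \alpha(\nabla_W L)A^\top$, we get $\|\nabla_A L\|_F^2 = \alpha^2\,\mathrm{Tr}(B^\top (\nabla_W L)(\nabla_W L)^\top B)$ and $\|\nabla_B L\|_F^2 = \alpha^2\,\mathrm{Tr}(A (\nabla_W L)^\top(\nabla_W L) A^\top)$. The key structural point is that the first objective depends only on $B$ and the second only on $A$, so the joint maximization \emph{decouples} into two independent problems of the same shape; it suffices to treat the $B$-problem and transcribe the answer.

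Next I would plug in the SVD $\nabla_W L = U S V^\top$, so that $(\nabla_W L)(\nabla_W L)^\top = U S^2 U^\top$, and change variables to $\widetilde B = U^\top B$. Since $U$ is orthogonal, this preserves $\|B\|_F$ and preserves orthonormality of columns, and the problem becomes: maximize $\mathrm{Tr}(\widetilde B^\top S^2 \widetilde B) = \sum_j s_j^2\,\rho_j$, where $\rho_j$ is the squared norm of the $j$-th row of $\widetilde B$, subject to $\sum_j \rho_j = \|\widetilde B\|_F^2 = r$ and $\rho_j \ge 0$. Here I would make explicit that the ``matrix form'' of LoRA-GA uses orthonormal columns, i.e.\ $\widetilde B^\top\widetilde B = I_r$; then $\widetilde B\widetilde B^\top$ is a rank-$r$ orthogonal projector, which adds the box constraint $0 \le \rho_j \le 1$. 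The objective $\sum_j s_j^2 \rho_j$ is linear in $\rho$ over the polytope $\{\,0\le\rho_j\le 1,\ \sum_j\rho_j = r\,\}$, hence maximized at a vertex; ordering $s_1\ge s_2\ge\cdots$, the maximizing vertex is $\rho_1=\cdots=\rho_r=1$ and the rest $0$, with optimal value $\sum_{i=1}^r s_i^2$. This is exactly the Ky Fan / Rayleigh--Ritz characterization of the sum of the $r$ largest eigenvalues of $S^2$. Translating back through $\widetilde B = U^\top B$ gives $\widetilde B = [\,I_r;\,0\,]$ up to right multiplication by an $r\times r$ orthogonal matrix (which leaves $\|\nabla_A L\|_F$ invariant), i.e.\ $B^\star = U_{:,:r}$; the symmetric argument with $\widetilde A = AV$ and $(\nabla_W L)^\top(\nabla_W L) = V S^2 V^\top$ yields $A^\star = V_{:,:r}^\top$, and a final substitution confirms $\|\nabla_A L\|_F^2 = \|\nabla_B L\|_F^2 = \alpha^2\sum_{i=1}^r s_i^2$ at the optimum.

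The main obstacle is pinning down the feasible set rather than any computation. Under $\|B\|_F^2 = r$ \emph{alone}, the maximum of $\sum_j s_j^2\rho_j$ is $r s_1^2$, attained by a rank-one $B$ concentrated on $u_1$, and $r s_1^2 \ge \sum_{i=1}^r s_i^2$; so the stated optimizer is correct only when the constraint is read as orthonormal columns/rows, which is precisely what LoRA-GA's construction enforces. Everything else (the trace rewriting, the orthogonal change of variables, the linear-program-over-a-polytope step) is routine, with the only other subtlety being that the optimizer is unique only up to an orthogonal rotation of the rank-$r$ factor.
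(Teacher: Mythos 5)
Your reduction to the trace objectives $\mathrm{Tr}\bigl(B^\top (\nabla_W L)(\nabla_W L)^\top B\bigr)$ and $\mathrm{Tr}\bigl(A (\nabla_W L)^\top(\nabla_W L) A^\top\bigr)$, the observation that the two problems decouple, and the appeal to the extremal characterization of the top-$r$ singular subspace are exactly the route the paper takes. Where you go beyond the paper is in the last step: the paper simply writes ``we use the variational characterization of the singular value decomposition'' under the constraints $\|A\|_F^2 = \|B\|_F^2 = r$, whereas you correctly point out that this characterization (Ky Fan / Rayleigh--Ritz) requires orthonormal columns, not merely a Frobenius-norm budget. Your counter-observation is right and worth keeping: with only $\sum_j \rho_j = r$, $\rho_j \ge 0$, the linear objective $\sum_j s_j^2\rho_j$ is maximized by concentrating all mass on $s_1^2$, giving $r s_1^2 \ge \sum_{i=1}^r s_i^2$ with equality only when $s_1 = \cdots = s_r$; so $B^\star = U_{:,:r}$ is \emph{not} a maximizer of the problem as literally stated unless one adds the box constraint $\rho_j \le 1$, which is exactly what $B^\top B = I_r$ supplies. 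Since LoRA-GA's construction does produce orthonormal factors, your reading of the constraint is the faithful one, and your proof is the rigorous version of the argument the paper only gestures at. The one cosmetic point: the paper's appendix drops the $\alpha$ prefactor from Eq.~\eqref{eq:lora_gradient} in this proof, while you carry it; either convention is fine since $\alpha^2 > 0$ does not affect the argmax.
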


\begin{proof}
Let \(G=\nabla_W L\in\mathbb{R}^{n\times m}\) and write its SVD \(G=U S V^\top\) with singular values \(\sigma_1\ge\sigma_2\ge\cdots\ge0\).
From the LoRA gradient expressions we have
\[
\nabla_A L = B^\top G,\qquad \nabla_B L = G A^\top.
\]

Note that \(\nabla_A L\) does not depend on \(A\) and \(\nabla_B L\) does not depend on \(B\), hence the two maximization problems decouple.

\emph{(i) Maximizing \(\|\nabla_B L\|_F^2\) w.r.t. \(A\) under \(\|A\|_F^2=r\).}
Let \(A':=A V\). Then \(\|A'\|_F^2=\|A\|_F^2=r\) and
\[
\|G A^\top\|_F^2 = \|U S (A')^\top\|_F^2 = \|S (A')^\top\|_F^2
= \mathrm{tr}\big(A' S^2 (A')^\top\big).
\]
Form the Lagrangian \(\mathcal{L}(A',\lambda)=\mathrm{tr}(A' S^2 (A')^\top)-\lambda(\mathrm{tr}(A'(A')^\top)-r)\).
Taking derivative w.r.t. \(A'\) yields the stationarity condition \(A' S^2=\lambda A'\).
Thus the rows of \(A'\) lie in the eigenspaces of \(S^2\). To maximize the trace, one places the Frobenius norm onto the coordinates corresponding to the largest diagonal entries of \(S^2\), i.e. the first \(r\) singular values. Taking \(A'^\star=[I_r\mid 0]\) attains the maximum
\[
\max_{\|A\|_F^2=r}\|G A^\top\|_F^2=\sum_{i=1}^r\sigma_i^2,
\]
and \(A^\star=A'^\star V^\top=V_{:,:r}^\top\) (up to orthonormal transformations within the chosen \(r\)-subspace).

\emph{(ii) Maximizing \(\|\nabla_A L\|_F^2\) w.r.t. \(B\) under \(\|B\|_F^2=r\).}
Analogously let \(B':=U^\top B\). Then
\[
\|B^\top G\|_F^2 = \mathrm{tr}\big((B')^\top S^2 B'\big),
\]
and the same Lagrange argument gives that the optimal choice concentrates norm on the first \(r\) coordinates, producing
\[
\max_{\|B\|_F^2=r}\|B^\top G\|_F^2=\sum_{i=1}^r\sigma_i^2,
\]
and one may take \(B^\star=U_{:,:r}\).

Hence the stated solutions \(A^\star=V_{:,:r}^\top\) and \(B^\star=U_{:,:r}\) are optimal, with the above maximal values. This completes the proof.
\end{proof}

\section{Details on Key Formulas}
\subsection{Weight Update Magnitude}
$$\nu[\Delta W^{(t)}_{\mathrm{LoRA}}] \approx r\alpha^2\eta^2 \left( \nu[B^{(t)}]\nu[\nabla_A L^{(t)}] + \nu[\nabla_B L^{(t)}]\nu[A^{(t)}] \right).$$
\begin{proof}
Given the LoRA parameter update rule:
\begin{equation}
    \Delta W^{(t)}_{\mathrm{LoRA}} = \alpha\eta\left( B^{(t)} \nabla_A L^{(t)} + \nabla_B L^{(t)} A^{(t)} \right) + \mathcal{O}(\eta^2),
\end{equation}
where we retain first-order terms in $\eta$ under small learning rate assumption. 

\vspace{0.5em}
\noindent\textbf{Assumptions:}
\begin{itemize}
    \item[(A1)] Independence: $B \perp\!\!\!\perp \nabla_A L$, $\nabla_B L \perp\!\!\!\perp A$
    \item[(A2)] Zero-mean initialization: $\mathbb{E}[B_{ij}] = \mathbb{E}[A_{kl}] = 0$
    \item[(A3)] Spatial homogeneity: $var[B_{ij}] = \nu[B]$, $var[(\nabla_A L)_{ij}] = \nu[\nabla_A L]$. 
\end{itemize}

For entry $(m,n)$ in $\Delta W^{(t)}_{\mathrm{LoRA}}$:
\begin{align}
    var\left[(\Delta W)_{mn}\right] &\approx \alpha^2\eta^2 var\left[\sum_{k=1}^r \left( B_{mk}(\nabla_A L)_{kn} + (\nabla_B L)_{mk}A_{kn} \right)\right] \label{eq:variance_start} \\
    &= \alpha^2\eta^2 \left( {var\left[\sum_{k}B_{mk}(\nabla_A L)_{kn}\right]} + var\left[\sum_{k}(\nabla_B L)_{mk}A_{kn}\right] \right). \label{eq:term_separation}
\end{align}

By assumptions (A1)-(A3):
\begin{align}
    var\left[\sum_{k}B_{mk}(\nabla_A L)_{kn}\right] &= \sum_{k}var[B_{mk}]var[(\nabla_A L)_{kn}] \label{eq:term1_var} \\
    &= r\nu[B]\nu[\nabla_A L]. 
\end{align}

Similarly:
\begin{align}
    var\left[\sum_{k}(\nabla_B L)_{mk}A_{kn}\right] &= r\nu[\nabla_B L]\nu[A]. 
\end{align}

Combining these terms:
\begin{align}
    \nu[\Delta W^{(t)}_{\mathrm{LoRA}}] &\approx \alpha^2\eta^2 \left( r\nu[B]\nu[\nabla_A L] + r\nu[\nabla_B L]\nu[A] \right) \\
    &= r\alpha^2\eta^2 \left( \nu[B]\nu[\nabla_A L] + \nu[\nabla_B L]\nu[A] \right).
\end{align}

where $\nu[B]$, $\nu[A]$ inherit their magnitudes from initialization scheme, and $\nu[\nabla L]$ terms reflect task-specific loss landscape characteristics.
\end{proof}

\subsection{Spectral Concentration Factor}

Let the singular values of the pretrained weight matrix \(W\) be \(\{s_i\}_{i=1}^{\mathcal{R}[W]}\). Define the average of the top-\(r\) singular values as
\begin{equation}
\mathbb{E}_r[s] = \frac{1}{r}\sum_{i=1}^r s_i,
\end{equation}
and the average of the squares of all singular values as
\begin{equation}
\mathbb{E}_{\mathcal{R}[W]}[s^2] = \frac{1}{\mathcal{R}[W]}\sum_{i=1}^{\mathcal{R}[W]} s_i^2.
\end{equation}
Because the square function is convex, Jensen's inequality implies
\begin{equation}
\mathbb{E}_r[s]^2 \le \mathbb{E}_r[s^2],
\end{equation}
with equality if and only if all \(s_i\) (for \(i=1,\dots,r\)) are equal. Moreover, since lower singular values generally contribute less to the overall energy, as \(r\) increases the value of \(\mathbb{E}_r[s]\) decreases relative to \(\mathbb{E}_{\mathcal{R}[W]}[s^2]\), making \(\rho[r]\) a monotonically decreasing function of \(r\).

\subsection{Variance of \(A_{\mathrm{SVD}}\) and \(B_{\mathrm{SVD}}\)}

Recall that the PiSSA initialization is given by
\begin{equation}
A_{\mathrm{SVD}} = \sqrt{S_r}\, V_{:, :r}^\top, \quad B_{\mathrm{SVD}} = U_{:,:r}\sqrt{S_r},
\end{equation}
where \(S_r\) is a diagonal matrix containing the top-\(r\) singular values of \(W\). Assuming that the columns of \(V\) (and similarly, \(U\)) form an orthonormal basis, the variance of \(V_{:, :r}\) (taken elementwise) is approximately \(\nu[V_{:r}] \approx \frac{1}{m}\) (or \(\frac{1}{n}\) for \(U\)), since for an orthogonal matrix the energy is uniformly distributed. Thus, the variance of \(A_{\mathrm{SVD}}\) can be expressed as:
\begin{equation}
\nu(A_{\mathrm{SVD}}) = \frac{\mathrm{Tr}(\sqrt{S^{T}_r}V_{:,:r}^{T}V_{:,:r}\sqrt{S})}{mr}=\frac{\sum_{i=1}^{r}s_i}{mr}=  \mathbb{E}_r[s]\nu[V_{:r}].
\end{equation}
To connect this with the variance of \(W\), note that
\begin{equation}
\nu[W] = \frac{1}{mn}\sum_{i=1}^{\mathcal{R}[W]} s_i^2.
\end{equation}
Thus, we can relate \(\mathbb{E}_r[s]\) to \(\nu[W]\) via the spectral concentration factor \(\rho[r]\). Incorporating a factor of \(n\) to account for the dimensions of \(A_{\mathrm{SVD}}\), we obtain:
\begin{equation}
\nu(A_{\mathrm{SVD}}) = \sqrt{\frac{n\,\rho[r]\,\nu[W]}{m\,\mathcal{R}[W]}}.
\end{equation}
A similar argument, with the roles of \(m\) and \(n\) interchanged, leads to the expression for \(\nu(B_{\mathrm{SVD}})\). 

\subsection{Spectral Gain Factor}

The dynamics of the LoRA weight update variance are captured by an expression of the form:
\begin{equation}
\nu[W^{(t)}_{\text{LoRA}}] \approx k_1\gamma t + k_2\gamma^2 t^2,
\end{equation}
where \(k_1\) is the linear evolution rate. Substituting the variance expressions derived for \(A_{\mathrm{SVD}}\) and \(B_{\mathrm{SVD}}\) into the dynamical system (see Eq.~\eqref{eq:sigma_dynamics}), we obtain:
\begin{equation}
k_1 = \frac{\rho[r]\,r\,(m+n)}{\mathcal{R}[W]}\,\nu[W].
\end{equation}
Defining
\begin{equation}
Q[r] \triangleq \frac{\rho[r]\,r}{\mathcal{R}[W]},
\end{equation}
this expression becomes:
\begin{equation}
k_1 = Q[r](m+n)\nu[W],
\end{equation}
with the constraint \(0 \le Q[r] \le 1\). When \(\nu[W] \sim \mathcal{O}(\min(1/m,1/n))\), the factor \(Q[r]\) effectively quantifies the amplification of the weight update magnitude due to the spectral initialization.

\begin{proof}
Recall that
\begin{equation}
\rho[r] \triangleq \frac{\left(\frac{1}{r}\sum_{i=1}^r s_i\right)^2}{\frac{1}{\mathcal{R}[W]}\sum_{i=1}^{\mathcal{R}[W]} s_i^2},
\end{equation}
and the spectral gain factor is defined as
\begin{equation}
Q[r] \triangleq \frac{\rho[r]\,r}{\mathcal{R}[W]}.
\end{equation}
Let \(m = \mathcal{R}[W]\). By Jensen's inequality (or by the Cauchy–Schwarz inequality), we have
\begin{equation}
\left(\frac{1}{r}\sum_{i=1}^r s_i\right)^2 \le \frac{1}{r}\sum_{i=1}^r s_i^2.
\end{equation}
Therefore,
\begin{equation}
\rho[r] \le \frac{\frac{1}{r}\sum_{i=1}^r s_i^2}{\frac{1}{m}\sum_{i=1}^{m} s_i^2} = \frac{m}{r} \cdot \frac{\sum_{i=1}^r s_i^2}{\sum_{i=1}^{m} s_i^2} \le \frac{m}{r},
\end{equation}
since \(\frac{\sum_{i=1}^r s_i^2}{\sum_{i=1}^{m} s_i^2} \le 1\). Substituting this bound into the definition of \(Q[r]\) yields
\begin{equation}
Q[r] = \frac{\rho[r]\,r}{m} \le \frac{m}{r} \cdot \frac{r}{m} = 1.
\end{equation}
This completes the proof that \(Q[r] \le 1\).
\end{proof}

\section{Further Clarifications}
In this section, we provide additional details regarding the experimental setup for our theoretical validations and justify the core assumptions underlying our propositions.

\subsection{Clarification on Figure~\ref{fig:dynamics}: Experimental Setup and Optimizer Dynamics}
The experiments illustrated in Figure~\ref{fig:dynamics} were conducted in a controlled environment to isolate and validate our theoretical claims regarding hyperparameter equivalence and magnitude dynamics. The setup uses a 5-layer MLP with an intermediate dimension of 400, where LoRA modules are trained to fit a mapping for randomly generated synthetic data. This simple setting effectively tests the fundamental fitting capabilities of LoRA.

Figure~\ref{fig:dynamics}(a) is specifically designed to empirically validate Proposition~\ref{prop: Parameter Scaling Equivalence}, reproduced with clearer separation in Figure~\ref{fig:sgd_adam_dynamics}. We deliberately use two different optimizers, SGD for the first 2,500 steps and Adam thereafter, to demonstrate that when the hyperparameter product $\alpha' \alpha_A \alpha_B$ is held constant, LoRA exhibits equivalent training trajectories regardless of the optimizer. This equivalence is confirmed by the overlapping loss curves. Furthermore, the norm difference $||\Delta \tilde{W}_{\text{LoRA}}^{(t)} - \Delta W_{\text{LoRA}}^{(t)} ||_F^2$ between the baseline and other equivalent settings remains near zero, confirming that the learned weights themselves evolve identically.

\begin{figure}[h!]
    \centering
    \includegraphics[width=1.0\linewidth]{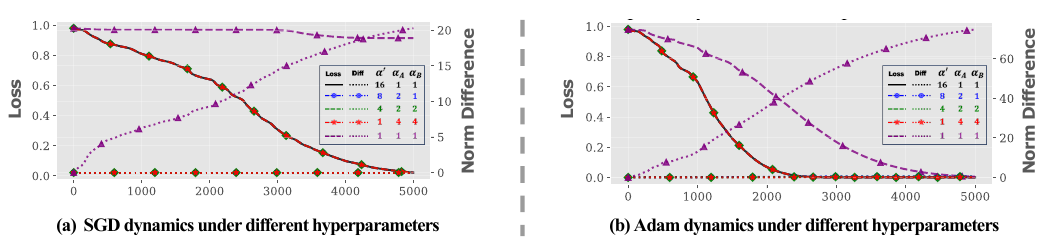}
    \caption{A detailed view of the validation of Proposition~\ref{prop: Parameter Scaling Equivalence}, separating the SGD and Adam optimization phases. Each curve represents a model with a unique hyperparameter combination. The norm difference (right axis) aggregates the Frobenius norm discrepancies between the baseline model (black curve) and others across all layers. The results show that diverse hyperparameter sets can produce identical optimization trajectories, confirming our theoretical equivalence framework.}
    \label{fig:sgd_adam_dynamics}
\end{figure}

\subsection{Clarification on Figure~\ref{fig:rho}: Direct Relationship to PiSSA}
The goal of Figure~\ref{fig:rho} is to visualize the source of magnitude gain in spectral initialization methods, for which we use PiSSA~\cite{PiSSA} as a representative example. The link is direct:
\begin{itemize}[topsep=0em, leftmargin=0em, itemindent=1em]
    \item The solid colored curves for the spectral concentration factor ($\rho[r]$) and spectral gain factor ($Q[r]$) are calculated directly from the SVD of the pretrained weights. This is precisely the mechanism that PiSSA employs for initialization. Therefore, these curves illustrate the inherent magnitude dynamics of a PiSSA-initialized model.
    \item The plots reveal two key insights from our analysis of PiSSA. First, the plot of $\rho[r]$ shows that spectral energy is highly concentrated in the top singular values, explaining why PiSSA is particularly effective at low ranks. Second, the plot of $Q[r]$ quantifies the significant magnitude gain that PiSSA (colored lines) achieves over the naive "Noise \& Zeros" baseline (white dotted line). This is precisely the gain that LoRAM is designed to mimic efficiently without performing SVD.
\end{itemize}

\section{Further Analysis on the Interplay of Initialization, Learning Rate, and Performance}

Our primary experiments were conducted under a unified hyperparameter configuration to ensure a fair and controlled comparison. However, different initialization strategies may achieve optimal performance under varying hyperparameters, particularly the learning rate. To provide a more nuanced understanding, we conducted a supplementary analysis investigating the performance of LoRAM, PiSSA~\cite{PiSSA}, and MiLoRA~\cite{MiLoRA} under different learning rates.

Specifically, we compare the results from our original experiments using a moderate learning rate~($2 \times 10^{-5}$) with new results obtained using a higher learning rate ($2 \times 10^{-4}$), which aligns with the setting used in the original MiLoRA paper. The detailed results for ranks $r=16$ and $r=128$ are presented in Table \ref{tab:lr_exp_r16} and Table \ref{tab:lr_exp_r128}, respectively.

\begin{table}[h!]
\centering
\caption{Performance of PiSSA, LoRAM, and MiLoRA with different learning rates at rank $r=16$.}
\label{tab:lr_exp_r16}
\begin{tabular}{@{}lcccccc@{}}
\toprule
\textbf{Learning rate} & \multicolumn{3}{c}{$2 \times 10^{-5}$} & \multicolumn{3}{c}{$2 \times 10^{-4}$} \\
\cmidrule(lr){2-4} \cmidrule(lr){5-7}
\textbf{Method} & PiSSA & LoRAM & MiLoRA & PiSSA & LoRAM & MiLoRA \\
\midrule
GSM8K & 37.68 & 40.32 & 29.70 & 52.46 & 53.29 & 46.62 \\
MATH & 5.16 & 5.30 & 4.18 & 8.80 & 8.92 & 6.18 \\
HumanEval & 18.37 & 18.92 & 14.69 & 24.47 & 25.62 & 17.14 \\
MBPP & 28.62 & 28.83 & 27.23 & 31.22 & 31.74 & 28.65 \\
Commonsense & 73.72 & 75.19 & 67.90 & 77.02 & 79.11 & 76.67 \\
\bottomrule
\end{tabular}
\end{table}

\begin{table}[h!]
\centering
\caption{Performance of PiSSA, LoRAM, and MiLoRA with different learning rates at rank $r=128$.}
\label{tab:lr_exp_r128}
\begin{tabular}{@{}lcccccc@{}}
\toprule
\textbf{Learning rate} & \multicolumn{3}{c}{$2 \times 10^{-5}$} & \multicolumn{3}{c}{$2 \times 10^{-4}$} \\
\cmidrule(lr){2-4} \cmidrule(lr){5-7}
\textbf{Method} & PiSSA & LoRAM & MiLoRA & PiSSA & LoRAM & MiLoRA \\
\midrule
GSM8K & 51.48 & 51.12 & 39.81 & 58.37 & 59.28 & 54.66 \\
MATH & 7.04 & 7.25 & 5.18 & 11.46 & 10.76 & 9.20 \\
HumanEval & 21.62 & 22.03 & 20.39 & 26.81 & 29.95 & 28.02 \\
MBPP & 31.07 & 31.53 & 29.95 & 36.54 & 37.80 & 33.35 \\
Commonsense & 77.28 & 77.81 & 74.29 & 67.09 & 74.23 & 79.01 \\
\bottomrule
\end{tabular}
\end{table}

The results reveal several key patterns:
\begin{itemize}[topsep=0em, leftmargin=0em, itemindent=1em]
    \item \textbf{At a lower rank ($r=16$)}, all methods generally benefit from the higher learning rate, showing improved performance across tasks. This suggests that in low-rank settings, a larger learning rate can enhance convergence speed.
    \item \textbf{At a higher rank ($r=128$)}, a clear divergence emerges. While most methods still improve, PiSSA exhibits a notable performance degradation on the Commonsense dataset. Its training loss curve in this high-rank, high-LR configuration plateaus early, suggesting that the amplified updates overshoot the effective descent direction.
    \item These findings align perfectly with our \textbf{magnitude principle}. PiSSA's use of principal singular components leads to stronger initial magnitude amplification, which necessitates a smaller optimal learning rate to maintain stability. Conversely, MiLoRA, which initializes with smaller minor singular components, can benefit from a larger learning rate. This analysis reinforces our claim from Section~\ref{sec:ablation}: ``Magnitude scaling should be applied conservatively at higher ranks, since larger ranks inherently amplify updates.'' 
\end{itemize}

This supplementary analysis underscores that the optimal learning rate is intrinsically linked to the magnitude scaling introduced by the initialization method.

\section{Efficiency Analysis: Computational and Memory Overhead}

A primary motivation for LoRAM is to achieve the performance gains of spectral initialization methods while preserving the computational efficiency and minimal memory footprint of the original LoRA framework. In this section, we provide a detailed comparison of the theoretical and practical overhead associated with LoRAM, LoRA, and PiSSA.

\subsection{Theoretical Complexity}
The theoretical time and space complexities for the initialization phase of each method are summarized in Table \ref{tab:theoretical_complexity}. LoRA's overhead is minimal, stemming from random matrix initialization. LoRAM maintains this same linear complexity, as its deterministic DST basis is generated efficiently through vector multiplication. In contrast, PiSSA's complexity is dominated by the SVD of the pretrained weight matrices, a computationally intensive operation. Furthermore, PiSSA requires storing both the original and decomposed components, effectively doubling the space requirement compared to LoRA and LoRAM.

\begin{table}[h!]
\centering
\caption{Theoretical initialization time and space complexities of LoRA, LoRAM, and PiSSA.}
\label{tab:theoretical_complexity}
\begin{tabular}{@{}lccc@{}}
\toprule
\textbf{Method} & LoRA & LoRAM & PiSSA \\
\midrule
Time  & $O(mr+nr)$ & $O(mr+nr)$ & $O(\min(m^2n, mn^2))$ \\
Space & $O(r(m+n))$ & $O(r(m+n))$ & $O(2r(m+n))$ \\
\bottomrule
\end{tabular}
\end{table}

\subsection{Practical Performance}
We empirically validated these complexities by measuring the practical initialization time and memory usage while fine-tuning LLaMA-7B on an 8-GPU server. The results, shown in Table \ref{tab:practical_performance}, highlight two distinct workflows for PiSSA:
\begin{enumerate}
    \item \textbf{Pre-processing :} This approach first computes and saves the residual model after subtracting the low-rank approximation. While the initialization itself is faster, this process incurs a substantial one-time storage cost, requiring over 12.5\,GB to store the residual weights in addition to the LoRA adapter weights.
    \item \textbf{Direct Workflow:} This method computes SVD on-the-fly, which avoids the large storage overhead. However, it is significantly slower (often taking over 10 minutes in our setup) due to known bottlenecks related to CPU-based SVD computations before GPU transfer .
\end{enumerate}

As shown in the table, LoRAM's practical performance is nearly identical to that of the standard LoRA implementation, demonstrating its exceptional efficiency . It successfully eliminates the significant time and space overhead introduced by SVD-based methods like PiSSA, confirming its utility as a lightweight yet powerful initialization strategy.

\begin{table}[h!]
\centering
\caption{Practical initialization time and space cost of LoRA, LoRAM, and PiSSA on LLaMA-7B.}
\label{tab:practical_performance}
\begin{tabular}{@{}lcccc@{}}
\toprule
\textbf{Metric} & \textbf{LoRA} & \textbf{LoRAM} & \textbf{PiSSA (pre-process)} & \textbf{PiSSA (direct)} \\
\midrule
Time ($r=16$)   & 1.36s & 0.95s & 51.32s & $>$ 10min \\
Time ($r=128$)  & 4.23s & 2.19s & 57.79s & $>$ 10min \\
Time ($r=512$)  & 8.50s & 5.19s & 110.31s & $>$ 10min \\
\midrule
Space ($r=16$)  & 152MB & 152MB & 12.5GB + 152MB & 305MB \\
Space ($r=128$) & 1.2GB & 1.2GB & 12.5GB + 1.2GB & 2.4GB \\
Space ($r=512$) & 4.8GB & 4.8GB & 12.5GB + 4.8GB & 9.6GB \\
\bottomrule
\end{tabular}
\end{table}

\section{Scope and Limitations of the Magnitude Principle}

While our work establishes the magnitude of weight updates as a fundamental driver of LoRA's performance, it is crucial to clearly define the scope and limitations of this principle. Our central claim is that magnitude plays a \textbf{primary, not universal}, role in the success of various hyperparameter-tuning strategies. The principle provides a coherent and predictive lens for unifying these strategies, rather than suggesting that magnitude is the sole determinant of all outcomes.

The applicability and effects of magnitude scaling are subject to several important considerations:

\begin{itemize}[topsep=0em, leftmargin=0em, itemindent=1em]
    \item \textbf{Interplay with Other Hyperparameters:} As indicated by our analysis in Proposition~\ref{prop:Parameter Magnitude Dynamics brief}, the magnitude principle is not limited to initialization magnitudes ($\sigma_A^2, \sigma_B^2$) alone. It is intrinsically linked to other key factors, including the LoRA rank ($r$), the learning rate ($\eta$), and the task-dependent gradient variance ($\sigma_L^2$). The final performance is a result of the complex interplay among all these components.

    \item \textbf{Interaction with Rank Size:} A key finding, highlighted in our ablation studies and consistent with our theoretical analysis, is that the benefits of magnitude scaling are not monotonic. We empirically observe that the performance gains from increased magnitude scaling tend to diminish and can even reverse at higher ranks. This is because larger ranks inherently amplify updates, as predicted by Proposition~\ref{prop:Parameter Magnitude Dynamics brief}, suggesting that magnitude should be scaled more conservatively in high-rank settings.

    \item \textbf{On Optimality:} Our work aims to identify and validate update magnitude as a core mechanism influencing LoRA's training dynamics, thereby demystifying the success of methods like PiSSA. We do not claim to have identified an optimal magnitude scaling strategy. Determining the optimal magnitude for different models, tasks, and layers remains a challenging and important direction for future research.
\end{itemize}

In summary, the magnitude principle serves as a powerful analytical tool for understanding and designing LoRA-based methods, but its application should be contextualized within the broader optimization landscape.

\newpage

\section*{NeurIPS Paper Checklist}

\begin{enumerate}

\item {\bf Claims}
    \item[] Question: Do the main claims made in the abstract and introduction accurately reflect the paper's contributions and scope?
    \item[] Answer: \answerYes{} % Replace by \answerYes{}, \answerNo{}, or \answerNA{}.
    \item[] Justification: The main claims in the abstract and introduction have clearly reflected the theoretical and empirical contributions of the paper.
    \item[] Guidelines:
    \begin{itemize}
        \item The answer NA means that the abstract and introduction do not include the claims made in the paper.
        \item The abstract and/or introduction should clearly state the claims made, including the contributions made in the paper and important assumptions and limitations. A No or NA answer to this question will not be perceived well by the reviewers. 
        \item The claims made should match theoretical and experimental results, and reflect how much the results can be expected to generalize to other settings. 
        \item It is fine to include aspirational goals as motivation as long as it is clear that these goals are not attained by the paper. 
    \end{itemize}

\item {\bf Limitations}
    \item[] Question: Does the paper discuss the limitations of the work performed by the authors?
    \item[] Answer: \answerYes{} % Replace by \answerYes{}, \answerNo{}, or \answerNA{}.
    \item[] Justification: We have declared the limitations in the article, including some theoretical assumptions, and have cited relevant literature to show that these assumptions are commonly used in the field. We have also included additional technical limitations at the end of the main paper.
    \item[] Guidelines:
    \begin{itemize}
        \item The answer NA means that the paper has no limitation while the answer No means that the paper has limitations, but those are not discussed in the paper. 
        \item The authors are encouraged to create a separate "Limitations" section in their paper.
        \item The paper should point out any strong assumptions and how robust the results are to violations of these assumptions (e.g., independence assumptions, noiseless settings, model well-specification, asymptotic approximations only holding locally). The authors should reflect on how these assumptions might be violated in practice and what the implications would be.
        \item The authors should reflect on the scope of the claims made, e.g., if the approach was only tested on a few datasets or with a few runs. In general, empirical results often depend on implicit assumptions, which should be articulated.
        \item The authors should reflect on the factors that influence the performance of the approach. For example, a facial recognition algorithm may perform poorly when image resolution is low or images are taken in low lighting. Or a speech-to-text system might not be used reliably to provide closed captions for online lectures because it fails to handle technical jargon.
        \item The authors should discuss the computational efficiency of the proposed algorithms and how they scale with dataset size.
        \item If applicable, the authors should discuss possible limitations of their approach to address problems of privacy and fairness.
        \item While the authors might fear that complete honesty about limitations might be used by reviewers as grounds for rejection, a worse outcome might be that reviewers discover limitations that aren't acknowledged in the paper. The authors should use their best judgment and recognize that individual actions in favor of transparency play an important role in developing norms that preserve the integrity of the community. Reviewers will be specifically instructed to not penalize honesty concerning limitations.
    \end{itemize}

\item {\bf Theory assumptions and proofs}
    \item[] Question: For each theoretical result, does the paper provide the full set of assumptions and a complete (and correct) proof?
    \item[] Answer: \answerYes{} % Replace by \answerYes{}, \answerNo{}, or \answerNA{}.
    \item[] Justification: We have provided full proofs for each proposition in the Appendix, and for the important formulas in the paper, we also give the corresponding derivation in the Appendix.
    \item[] Guidelines:
    \begin{itemize}
        \item The answer NA means that the paper does not include theoretical results. 
        \item All the theorems, formulas, and proofs in the paper should be numbered and cross-referenced.
        \item All assumptions should be clearly stated or referenced in the statement of any theorems.
        \item The proofs can either appear in the main paper or the supplemental material, but if they appear in the supplemental material, the authors are encouraged to provide a short proof sketch to provide intuition. 
        \item Inversely, any informal proof provided in the core of the paper should be complemented by formal proofs provided in appendix or supplemental material.
        \item Theorems and Lemmas that the proof relies upon should be properly referenced. 
    \end{itemize}

    \item {\bf Experimental result reproducibility}
    \item[] Question: Does the paper fully disclose all the information needed to reproduce the main experimental results of the paper to the extent that it affects the main claims and/or conclusions of the paper (regardless of whether the code and data are provided or not)?
    \item[] Answer: \answerYes{} % Replace by \answerYes{}, \answerNo{}, or \answerNA{}.
    \item[] Justification:  All benchmarks,  as well as the experimental setup, are from or based on work that has been open-sourced. We have given relevant references in the paper and have given specific hyperparameter settings. Moreover, our proposed LoRAM does not require tuning parameters, as illustrated in Algorithm 1.
    \item[] Guidelines:
    \begin{itemize}
        \item The answer NA means that the paper does not include experiments.
        \item If the paper includes experiments, a No answer to this question will not be perceived well by the reviewers: Making the paper reproducible is important, regardless of whether the code and data are provided or not.
        \item If the contribution is a dataset and/or model, the authors should describe the steps taken to make their results reproducible or verifiable. 
        \item Depending on the contribution, reproducibility can be accomplished in various ways. For example, if the contribution is a novel architecture, describing the architecture fully might suffice, or if the contribution is a specific model and empirical evaluation, it may be necessary to either make it possible for others to replicate the model with the same dataset, or provide access to the model. In general. releasing code and data is often one good way to accomplish this, but reproducibility can also be provided via detailed instructions for how to replicate the results, access to a hosted model (e.g., in the case of a large language model), releasing of a model checkpoint, or other means that are appropriate to the research performed.
        \item While NeurIPS does not require releasing code, the conference does require all submissions to provide some reasonable avenue for reproducibility, which may depend on the nature of the contribution. For example
        \begin{enumerate}
            \item If the contribution is primarily a new algorithm, the paper should make it clear how to reproduce that algorithm.
            \item If the contribution is primarily a new model architecture, the paper should describe the architecture clearly and fully.
            \item If the contribution is a new model (e.g., a large language model), then there should either be a way to access this model for reproducing the results or a way to reproduce the model (e.g., with an open-source dataset or instructions for how to construct the dataset).
            \item We recognize that reproducibility may be tricky in some cases, in which case authors are welcome to describe the particular way they provide for reproducibility. In the case of closed-source models, it may be that access to the model is limited in some way (e.g., to registered users), but it should be possible for other researchers to have some path to reproducing or verifying the results.
        \end{enumerate}
    \end{itemize}

\item {\bf Open access to data and code}
    \item[] Question: Does the paper provide open access to the data and code, with sufficient instructions to faithfully reproduce the main experimental results, as described in supplemental material?
    \item[] Answer: \answerNo{} % Replace by \answerYes{}, \answerNo{}, or \answerNA{}.
    \item[] Justification: Since we have conducted extensive experiments on several datasets, providing detailed scripts is intricate. The code will be made available upon acceptance of the article.
    \item[] Guidelines:
    \begin{itemize}
        \item The answer NA means that paper does not include experiments requiring code.
        \item Please see the NeurIPS code and data submission guidelines (\url{https://nips.cc/public/guides/CodeSubmissionPolicy}) for more details.
        \item While we encourage the release of code and data, we understand that this might not be possible, so “No” is an acceptable answer. Papers cannot be rejected simply for not including code, unless this is central to the contribution (e.g., for a new open-source benchmark).
        \item The instructions should contain the exact command and environment needed to run to reproduce the results. See the NeurIPS code and data submission guidelines (\url{https://nips.cc/public/guides/CodeSubmissionPolicy}) for more details.
        \item The authors should provide instructions on data access and preparation, including how to access the raw data, preprocessed data, intermediate data, and generated data, etc.
        \item The authors should provide scripts to reproduce all experimental results for the new proposed method and baselines. If only a subset of experiments are reproducible, they should state which ones are omitted from the script and why.
        \item At submission time, to preserve anonymity, the authors should release anonymized versions (if applicable).
        \item Providing as much information as possible in supplemental material (appended to the paper) is recommended, but including URLs to data and code is permitted.
    \end{itemize}

\item {\bf Experimental setting/details}
    \item[] Question: Does the paper specify all the training and test details (e.g., data splits, hyperparameters, how they were chosen, type of optimizer, etc.) necessary to understand the results?
    \item[] Answer: \answerYes{} % Replace by \answerYes{}, \answerNo{}, or \answerNA{}.
    \item[] Justification:  All benchmarks,  as well as the experimental setup, are from or based on work that has been open-sourced.
    \item[] Guidelines:
    \begin{itemize}
        \item The answer NA means that the paper does not include experiments.
        \item The experimental setting should be presented in the core of the paper to a level of detail that is necessary to appreciate the results and make sense of them.
        \item The full details can be provided either with the code, in appendix, or as supplemental material.
    \end{itemize}

\item {\bf Experiment statistical significance}
    \item[] Question: Does the paper report error bars suitably and correctly defined or other appropriate information about the statistical significance of the experiments?
    \item[] Answer: \answerYes{} % Replace by \answerYes{}, \answerNo{}, or \answerNA{}.
    \item[] Justification: We have provided the variance of the test results, and also the range of fluctuations in the training curves.
    \item[] Guidelines:
    \begin{itemize}
        \item The answer NA means that the paper does not include experiments.
        \item The authors should answer "Yes" if the results are accompanied by error bars, confidence intervals, or statistical significance tests, at least for the experiments that support the main claims of the paper.
        \item The factors of variability that the error bars are capturing should be clearly stated (for example, train/test split, initialization, random drawing of some parameter, or overall run with given experimental conditions).
        \item The method for calculating the error bars should be explained (closed form formula, call to a library function, bootstrap, etc.)
        \item The assumptions made should be given (e.g., Normally distributed errors).
        \item It should be clear whether the error bar is the standard deviation or the standard error of the mean.
        \item It is OK to report 1-sigma error bars, but one should state it. The authors should preferably report a 2-sigma error bar than state that they have a 96\% CI, if the hypothesis of Normality of errors is not verified.
        \item For asymmetric distributions, the authors should be careful not to show in tables or figures symmetric error bars that would yield results that are out of range (e.g. negative error rates).
        \item If error bars are reported in tables or plots, The authors should explain in the text how they were calculated and reference the corresponding figures or tables in the text.
    \end{itemize}

\item {\bf Experiments compute resources}
    \item[] Question: For each experiment, does the paper provide sufficient information on the computer resources (type of compute workers, memory, time of execution) needed to reproduce the experiments?
    \item[] Answer: \answerYes{} % Replace by \answerYes{}, \answerNo{}, or \answerNA{}.
    \item[] Justification: Our experiments are based on a server with 8 H800 GPUs. The timing of experiments is diverse depending on the datasets, while there will be no differences between LoRAM and other approaches on the same dataset, since they do not modify the training process.
    \item[] Guidelines:
    \begin{itemize}
        \item The answer NA means that the paper does not include experiments.
        \item The paper should indicate the type of compute workers CPU or GPU, internal cluster, or cloud provider, including relevant memory and storage.
        \item The paper should provide the amount of compute required for each of the individual experimental runs as well as estimate the total compute. 
        \item The paper should disclose whether the full research project required more compute than the experiments reported in the paper (e.g., preliminary or failed experiments that didn't make it into the paper). 
    \end{itemize}
    
\item {\bf Code of ethics}
    \item[] Question: Does the research conducted in the paper conform, in every respect, with the NeurIPS Code of Ethics \url{https://neurips.cc/public/EthicsGuidelines}?
    \item[] Answer: \answerYes{} % Replace by \answerYes{}, \answerNo{}, or \answerNA{}.
    \item[] Justification: We have carefully checked and followed the NeurIPS Code of Ethics.
    \item[] Guidelines: 
    \begin{itemize}
        \item The answer NA means that the authors have not reviewed the NeurIPS Code of Ethics.
        \item If the authors answer No, they should explain the special circumstances that require a deviation from the Code of Ethics.
        \item The authors should make sure to preserve anonymity (e.g., if there is a special consideration due to laws or regulations in their jurisdiction).
    \end{itemize}

\item {\bf Broader impacts}
    \item[] Question: Does the paper discuss both potential positive societal impacts and negative societal impacts of the work performed?
    \item[] Answer: \answerNA{} % Replace by \answerYes{}, \answerNo{}, or \answerNA{}.
    \item[] Justification: Our paper presents a general fine-tuning technique that is not specific to actual application scenarios.
    \item[] Guidelines:
    \begin{itemize}
        \item The answer NA means that there is no societal impact of the work performed.
        \item If the authors answer NA or No, they should explain why their work has no societal impact or why the paper does not address societal impact.
        \item Examples of negative societal impacts include potential malicious or unintended uses (e.g., disinformation, generating fake profiles, surveillance), fairness considerations (e.g., deployment of technologies that could make decisions that unfairly impact specific groups), privacy considerations, and security considerations.
        \item The conference expects that many papers will be foundational research and not tied to particular applications, let alone deployments. However, if there is a direct path to any negative applications, the authors should point it out. For example, it is legitimate to point out that an improvement in the quality of generative models could be used to generate deepfakes for disinformation. On the other hand, it is not needed to point out that a generic algorithm for optimizing neural networks could enable people to train models that generate Deepfakes faster.
        \item The authors should consider possible harms that could arise when the technology is being used as intended and functioning correctly, harms that could arise when the technology is being used as intended but gives incorrect results, and harms following from (intentional or unintentional) misuse of the technology.
        \item If there are negative societal impacts, the authors could also discuss possible mitigation strategies (e.g., gated release of models, providing defenses in addition to attacks, mechanisms for monitoring misuse, mechanisms to monitor how a system learns from feedback over time, improving the efficiency and accessibility of ML).
    \end{itemize}
    
\item {\bf Safeguards}
    \item[] Question: Does the paper describe safeguards that have been put in place for responsible release of data or models that have a high risk for misuse (e.g., pretrained language models, image generators, or scraped datasets)?
    \item[] Answer: \answerNA{} % Replace by \answerYes{}, \answerNo{}, or \answerNA{}.
    \item[] Justification: The paper poses no such risks.
    \item[] Guidelines: 
    \begin{itemize}
        \item The answer NA means that the paper poses no such risks.
        \item Released models that have a high risk for misuse or dual-use should be released with necessary safeguards to allow for controlled use of the model, for example by requiring that users adhere to usage guidelines or restrictions to access the model or implementing safety filters. 
        \item Datasets that have been scraped from the Internet could pose safety risks. The authors should describe how they avoided releasing unsafe images.
        \item We recognize that providing effective safeguards is challenging, and many papers do not require this, but we encourage authors to take this into account and make a best faith effort.
    \end{itemize}

\item {\bf Licenses for existing assets}
    \item[] Question: Are the creators or original owners of assets (e.g., code, data, models), used in the paper, properly credited and are the license and terms of use explicitly mentioned and properly respected?
    \item[] Answer: \answerYes{} % Replace by \answerYes{}, \answerNo{}, or \answerNA{}.
    \item[] Justification: We have properly cited the original paper that produced the code package or dataset.
    \item[] Guidelines:
    \begin{itemize}
        \item The answer NA means that the paper does not use existing assets.
        \item The authors should cite the original paper that produced the code package or dataset.
        \item The authors should state which version of the asset is used and, if possible, include a URL.
        \item The name of the license (e.g., CC-BY 4.0) should be included for each asset.
        \item For scraped data from a particular source (e.g., website), the copyright and terms of service of that source should be provided.
        \item If assets are released, the license, copyright information, and terms of use in the package should be provided. For popular datasets, \url{paperswithcode.com/datasets} has curated licenses for some datasets. Their licensing guide can help determine the license of a dataset.
        \item For existing datasets that are re-packaged, both the original license and the license of the derived asset (if it has changed) should be provided.
        \item If this information is not available online, the authors are encouraged to reach out to the asset's creators.
    \end{itemize}

\item {\bf New assets}
    \item[] Question: Are new assets introduced in the paper well documented and is the documentation provided alongside the assets?
    \item[] Answer: \answerNA{} % Replace by \answerYes{}, \answerNo{}, or \answerNA{}.
    \item[] Justification: The paper does not release new assets.
    \item[] Guidelines:
    \begin{itemize}
        \item The answer NA means that the paper does not release new assets.
        \item Researchers should communicate the details of the dataset/code/model as part of their submissions via structured templates. This includes details about training, license, limitations, etc. 
        \item The paper should discuss whether and how consent was obtained from people whose asset is used.
        \item At submission time, remember to anonymize your assets (if applicable). You can either create an anonymized URL or include an anonymized zip file.
    \end{itemize}

\item {\bf Crowdsourcing and research with human subjects}
    \item[] Question: For crowdsourcing experiments and research with human subjects, does the paper include the full text of instructions given to participants and screenshots, if applicable, as well as details about compensation (if any)? 
    \item[] Answer: \answerNA{} % Replace by \answerYes{}, \answerNo{}, or \answerNA{}.
    \item[] Justification: The paper does not involve crowdsourcing nor research with human subjects.
    \item[] Guidelines:
    \begin{itemize}
        \item The answer NA means that the paper does not involve crowdsourcing nor research with human subjects.
        \item Including this information in the supplemental material is fine, but if the main contribution of the paper involves human subjects, then as much detail as possible should be included in the main paper. 
        \item According to the NeurIPS Code of Ethics, workers involved in data collection, curation, or other labor should be paid at least the minimum wage in the country of the data collector. 
    \end{itemize}

\item {\bf Institutional review board (IRB) approvals or equivalent for research with human subjects}
    \item[] Question: Does the paper describe potential risks incurred by study participants, whether such risks were disclosed to the subjects, and whether Institutional Review Board (IRB) approvals (or an equivalent approval/review based on the requirements of your country or institution) were obtained?
    \item[] Answer: \answerNA{} % Replace by \answerYes{}, \answerNo{}, or \answerNA{}.
    \item[] Justification: The paper does not involve crowdsourcing nor research with human subjects.
    \item[] Guidelines:
    \begin{itemize}
        \item The answer NA means that the paper does not involve crowdsourcing nor research with human subjects.
        \item Depending on the country in which research is conducted, IRB approval (or equivalent) may be required for any human subjects research. If you obtained IRB approval, you should clearly state this in the paper. 
        \item We recognize that the procedures for this may vary significantly between institutions and locations, and we expect authors to adhere to the NeurIPS Code of Ethics and the guidelines for their institution. 
        \item For initial submissions, do not include any information that would break anonymity (if applicable), such as the institution conducting the review.
    \end{itemize}

\item {\bf Declaration of LLM usage}
    \item[] Question: Does the paper describe the usage of LLMs if it is an important, original, or non-standard component of the core methods in this research? Note that if the LLM is used only for writing, editing, or formatting purposes and does not impact the core methodology, scientific rigorousness, or originality of the research, declaration is not required.
    %this research? 
    \item[] Answer: \answerNA{} % Replace by \answerYes{}, \answerNo{}, or \answerNA{}.
    \item[] Justification:  The core method development in this research does not involve LLMs as any important, original, or non-standard components.
    \item[] Guidelines:
    \begin{itemize}
        \item The answer NA means that the core method development in this research does not involve LLMs as any important, original, or non-standard components.
        \item Please refer to our LLM policy (\url{https://neurips.cc/Conferences/2025/LLM}) for what should or should not be described.
    \end{itemize}

\end{enumerate}

\end{document}